%% file: main.tex
\documentclass{article}

\input{preamble/packages}

\input{preamble/macros}

\title{LBI: Parallel Scan Backpropagation via Latent Bounded Interfaces}

\author{%
  Shaun Christopher Lee \\
  University of California, Irvine \\
  \texttt{shauncl@uci.edu} \\
  \And
  Sangeetha Abdu Jyothi \\
  University of California, Irvine \\
  \texttt{sangeetha.aj@uci.edu} \\
}

\begin{document}

\maketitle

\input{sections/00-abstract}       
\input{sections/01-introduction}

\input{sections/03-mathematical-framework}

\input{sections/04-implementation}

\input{sections/05-experiments}
\input{sections/06-related-work}

\input{sections/07-discussion}

\input{sections/08-conclusion}

\bibliographystyle{plain}
\small
\bibliography{refs}
\normalsize

\appendix

\section{Vertex Separators}\label{app:vertex-separators}

\begin{definition}[Interface Separator]\label{def:interface-separator}
A set of nodes $S_k \subset \mathcal{V}$ is an \emph{interface separator} at boundary $k$ if every directed path from $\mathcal{R}_{<k}$ to $\mathcal{R}_{\geq k}$ contains at least one node in $S_k$.
\end{definition}

\begin{lemma}[A Separator Implies Sufficiency]\label{lem:separator-sufficiency}
If $S_k$ is an interface separator at boundary $k$, then the vector $m_k$ formed by collecting the values of nodes in $S_k$ is sufficient in the sense of Definition~\ref{def:sufficiency}.
\end{lemma}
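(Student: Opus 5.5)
The plan is to argue directly from the structure of the computation graph $\mathcal{G}=(\mathcal{V},\mathcal{E})$, which is a DAG: every node's value is a deterministic function of its parents' values. I take Definition~\ref{def:sufficiency} to mean that the forward computation of region $\mathcal{R}_{\geq k}$, together with the loss, is determined by $m_k$ and the parameters in $\mathcal{R}_{\geq k}$. Concretely, there should be a map $F_k$ with $\mathcal{L} = F_k(m_k;\theta_{\geq k})$, and consequently the backward signal $\partial\mathcal{L}/\partial m_k$ factors the gradient flow across the boundary. The proof therefore has two stages. First I show forward sufficiency: values downstream of the separator depend on upstream values only through $m_k$. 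Second I derive the gradient factorization by the chain rule.

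\textbf{Step 1 (forward).} Let $D_k$ be the set of nodes $v \in \mathcal{R}_{\geq k}\cup S_k$ that are reachable from $S_k$ or from sources inside $\mathcal{R}_{\geq k}$ without passing through $\mathcal{R}_{<k}\setminus S_k$. I would show by induction along a topological order that the value of every $v\in\mathcal{R}_{\geq k}$ is a function of $m_k$ and of the inputs and parameters local to $\mathcal{R}_{\geq k}$.

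Take a parent $u$ of $v$. There are three cases:
\begin{itemize}
\item If $u\in S_k$, its value is a coordinate of $m_k$.
\item If $u\in\mathcal{R}_{\geq k}\setminus S_k$, the induction hypothesis applies.
\item If $u\in\mathcal{R}_{<k}\setminus S_k$, then the edge $u\to v$ is a directed path from $\mathcal{R}_{<k}$ to $\mathcal{R}_{\geq k}$ that avoids $S_k$, unless $v\in S_k$. This contradicts Definition~\ref{def:interface-separator}.
\end{itemize}
Nodes in $S_k$ are read off $m_k$ directly, which handles the remaining case. Since the loss node lies in $\mathcal{R}_{\geq k}$, this yields $\mathcal{L}=F_k(m_k;\theta_{\geq k})$.

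\textbf{Step 2 (backward).} For any node or parameter $x$ in $\mathcal{R}_{<k}$, every dependence of $\mathcal{L}$ on $x$ passes through $S_k$, so the chain rule gives
\[
\frac{\partial\mathcal{L}}{\partial x} = \frac{\partial\mathcal{L}}{\partial m_k}\,\frac{\partial m_k}{\partial x}.
\]
This is the factorization that the sufficiency definition presumably demands, namely that the adjoint at boundary $k$ is a function of $m_k$ and downstream quantities alone.

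\textbf{Main obstacle.} The main difficulty is bookkeeping at the boundary. Separator nodes may themselves lie in $\mathcal{R}_{<k}$ or in $\mathcal{R}_{\geq k}$, and a separator node in $\mathcal{R}_{\geq k}$ may have parents in $\mathcal{R}_{<k}$. The edge case in Step 1 must therefore be phrased carefully: when $v\in S_k$, we do not recurse into its parents at all, but treat $v$ as an input supplied by $m_k$.

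A second subtlety is that $\partial m_k/\partial x$ must be taken as a total derivative within the upstream subgraph. Otherwise paths that reach $S_k$ and then continue to other separator nodes could be double-counted. I would handle this by cutting the graph at $S_k$, duplicating each separator node as a fresh source of the downstream subgraph, and applying the multivariate chain rule to the composition $\mathcal{L}=F_k\circ m_k$.
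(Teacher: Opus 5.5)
Your Step 1 is essentially the paper's proof. It uses induction along a topological order of $\mathcal{R}_{\geq k}$, reads separator nodes directly off $m_k$, applies the hypothesis to parents in $\mathcal{R}_{\geq k}$, and uses the single-edge path $u \to v$ to force any parent in $\mathcal{R}_{<k}$ of a non-separator $v$ into $S_k$. Definition~\ref{def:sufficiency} is a purely forward statement about every node of $\mathcal{R}_{\geq k}$, so Step 1 already completes the proof. Your Step 2 (the chain-rule factorization) is not needed here; it is instead the content of the paper's Lemma~\ref{lem:adjoint-factorization}.
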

\begin{proof}
We proceed by induction on the topological order of $\mathcal{G}$, restricted to $\mathcal{R}_{\geq k}$. Let $v \in \mathcal{R}_{\geq k}$ and suppose the claim made by the lemma holds for every node in $\mathcal{R}_{\geq k}$ that precedes $v$ in topological order. If $v \in S_k$, then the value of $v$ is a component of $m_k$ and the claim immediately holds. If $v \notin S_k$, then consider the parents of $v$ in $\mathcal{G}$. Notice that each parent $u$ of $v$ satisfies exactly one of the following conditions:
\begin{enumerate}
    \item $u \in \mathcal{R}_{\geq k}$. Then, $u$ precedes $v$ in topological order, so by the inductive hypothesis, $u$ is a function of $(m_k, \theta_{\geq k})$.
    \item $u \in \mathcal{R}_{<k}$. Then the edge $u \to v$ is a directed path from $\mathcal{R}_{<k}$ to $\mathcal{R}_{\geq k}$, which must contain a node in $S_k$ by Definition~\ref{def:interface-separator}. Since this path consists of only the two nodes $u$ and $v$, and $v \notin S_k$, we conclude that $u \in S_k$. Therefore the value of $u$ is a component of $m_k$.
\end{enumerate}
In either case, the value of every parent $u$ of $v$ is determined by $(m_k, \theta_{\geq k})$. Since $v$ computes a deterministic function of its parents, $v$ is also a function of $(m_k, \theta_{\geq k})$, which completes the induction.
\end{proof}

\begin{remark}[Minimum Separator Size]\label{rem:min-separator-size}
By Menger's vertex theorem~\citep{bohme2001menger}, the minimum vertex separator between $\mathcal{R}_{<k}$ and $\mathcal{R}_{\geq k}$ in a standard feedforward computational graph has size $d_k$, which is the full hidden state since each scalar component admits an independent directed path across the boundary. The bounded-interface property (Definition~\ref{def:bounded-interface-model}) therefore requires architectural modification. The encoder/decoder projections of Section~\ref{subsec:bi-architecture} create a new vertex separator of size $r_k \ll d_k$ in the modified graph, which cannot be achieved by selecting a subset of existing nodes.
\end{remark}

\paragraph{Graph-Theoretic vs. Effective Minimality.} Remark~\ref{rem:min-separator-size} establishes that exact sufficiency formally requires the full hidden state $d_k$. However, our experiments (Section~\ref{sec:experiments}) demonstrate that interfaces of dimension $r_k \ll d_k$ preserve training quality, indicating that inter-region communication concentrates in a low-dimensional subspace. The spectral analysis in Appendix~\ref{app:jacobian-structure} provides direct evidence. The interface Jacobians exhibit spectral concentration, with the leading singular value approximately $2.5\times$ the RMS singular value at $r=64$, suggesting that even within the $r$-dimensional interface, most gradient transport energy flows through a small number of directions. A second factor is the shared canvas $x_\mathrm{embed}$ (Section~\ref{subsec:bi-architecture}), which provides every region with the full $D$-dimensional token embedding independently of the interface chain. Because the canvas carries base token information, the interface need only encode inter-region refinements, reducing the effective information gap between $d_k$ and $r_k$ beyond what the learned function structure would predict. Characterizing the effective interface rank (the minimum $r_k$ for which the bounded-interface model achieves a given approximation quality) likely requires tools beyond the graph-theoretic framework developed here. Two natural directions are: (i) an information-theoretic analysis that bounds the mutual information $I(\mathcal{R}_{\geq k}; \mathcal{R}_{<k} \mid m_k)$ as a function of $r_k$, relating interface dimension to the information bottleneck principle~\citep{tishby2000information}, and (ii) a spectral analysis of the full-rank Jacobian $\partial h_k/\partial h_{k-1}$ at convergence, where the effective rank of this operator would predict the minimum interface dimension needed to preserve gradient transport quality. Both directions connect the architectural imposition (bounded interfaces) to the learned structure of the model and would provide principled guidance for choosing $r_k$.

\section{Representative Cost Instantiations}\label{app:representative-instances}

We derive the entries of Table~\ref{tab:transport-size-comparison}, then instantiate the work--span formulas of Proposition~\ref{prop:work-span-decomposition} for SSM-like and Transformer-like regions, assuming uniform interface width $r_k = r$. We analyze $r=64$ throughout. Since all per-combine costs scale as $O(r^3)$ and Jacobian construction scales as $O(r)$, the analysis is strictly more favorable at the $r=16$ that our experiments (Section~\ref{sec:experiments}) show suffices for model quality.

\begin{remark}[Derivation of Table~\ref{tab:transport-size-comparison}]\label{rem:table-transport-size-derivation}
Table~\ref{tab:transport-size-comparison} compares three regimes for transporting adjoints across $K$ region boundaries, where each boundary involves a $d \times d$ Jacobian ($d = BLD$) in the full-rank case or an $r \times r$ Jacobian in the bounded-interface case.

\emph{Sequential backpropagation.} Standard reverse-mode AD~\citep{baydin_AD} applies each $J_k^\top$ as a matrix-vector product (Theorem~\ref{thm:bi-backprop}): given $\bar{m}_{k+1} \in \mathbb{R}^d$, the product $J_k^\top \bar{m}_{k+1}$ costs $\Theta(d^2)$ FLOPs and reads $\Theta(d^2)$ data (the implicit Jacobian row/column entires accessed during the VJP), giving arithmetic intensity~\citep{williams2009roofline} $I = \Theta(1)$. The $K$ matvecs execute sequentially, yielding span $\Theta(Kd^2)$. The Jacobian is never materialized, but accessed implicitly through the AD tape.

\emph{Full-rank scan.} Replacing the sequential chain with a parallel scan (Proposition~\ref{prop:scan-parallel-backprop} with $r=d$) requires materializing each $J_k \in \mathbb{R}^{d \times d}$ and composing them via matrix-matrix multiplication. Each combine costs $\Theta(d^3)$ with arithmetic intensity $\Theta(d)$ (standard matmul on a $d \times d$ operand with $\Theta(d^2)$ data). The scan has $O(\log K)$ sequential combine steps, giving span $\Theta(d^3 \log K)$.

\emph{LBI scan.} Identical to the full-rank scan with $r$ replacing $d$. Then, the per-combine cost is $\Theta(r^3)$, the span is $\Theta(r^3 \log K)$, and the arithmetic intensity is $\Theta(r)$. The Jacobian $J_k \in \mathbb{R}^{r \times r}$ must be materialized, and its construction cost is analyzed below and excluded from Table~\ref{tab:transport-size-comparison}.
\end{remark}

\paragraph{Cost Structure.} Table~\ref{tab:representative-instances} summarizes the per-region costs for two canonical region types. For both, Jacobian construction multiplies forward FLOPs by $r$ while leaving the dominant memory traffic unchanged \emph{under activation reuse}. All $r$ perturbation directions share the same forward activations, so only lower-order tangent intermediates contribute additional traffic. The resulting arithmetic intensity is
\begin{equation}\label{eq:arithmetic-intensity-reuse}
    I(J_k) = rF_k/Q_k = r\cdot I_\mathrm{fwd},
\end{equation}
where $I_\mathrm{fwd} = F_k / Q_k$ is the arithmetic intensity of the forward pass alone.
\begin{table}[h!]
\centering
\begin{tabular}{c|c|c}
& SSM-like Region & Transformer-like Region \\
\hline
Activation shape & $B L D$, state $N$ & $B L D$, heads $H$, MLP $X$ \\
Forward FLOPs ($F_k$) & $\Theta(B L D N)$ & $\Theta(B L D^2 + B L^2 D + B L D X)$ \\
Forward memory ($Q_k$) & $\Theta(B L (D + N))$ & $\Theta(B L D + B H L^2 + B L X)$ \\
Jacobian FLOPs ($W_k^J$) & $\Theta(r \cdot B L D N)$ & $\Theta(r \cdot [B L D^2 + B L^2 D + B L D X])$ \\
Jacobian memory & $\Theta(B L (D + N))$ & $\Theta(B L D + B H L^2 + B L X)$ \\
Jacobian $I$ & $\Theta(r \cdot D N / (D + N))$ & $\Theta(r \cdot (D^2 + L D + D X) / (D + H L + X))$ 
\end{tabular}
\caption{Per-region cost of Jacobian construction for SSM-like and Transformer-like regions under uniform interface width $r$. Jacobian FLOPs scale by $r$ relative to the forward pass. Jacobian memory is unchanged under perfect activation reuse (Eq.~\eqref{eq:arithmetic-intensity-reuse}). $I$ denotes arithmetic intensity. Region subscripts are dropped for readability.}
\label{tab:representative-instances}
\end{table}

\paragraph{Activation Reuse in Practice.} Eq.~\eqref{eq:arithmetic-intensity-reuse} assumes that all $r$ perturbation directions share a single read of the forward activations. This holds when the $r$ directions are processed in a single fused kernel pass. In practice, memory constraints may require processing in chunks of size $c \leq r$ (Appendix~\ref{app:implement-details}), in which case the forward activations are read $\lceil r/c \rceil$ times and the effective arithmetic intensity becomes
\begin{equation}
    I_\mathrm{eff}(J_k) \approx c \cdot I_\mathrm{fwd},
\end{equation}
interpolating between $I_\mathrm{fwd}$ (no reuse, $c=1$) and $r\cdot I_\mathrm{fwd}$ (perfect reuse, $c=r$). We instantiate these formulas for a configuration representative of medium-scale language models ($\sim$1B parameters): $B=8$, $L=2048$, $D=768$, $N=16$ (SSM state size), $H=12$, $X=3072$ (Transformer MLP width), $r=64$, and $K=16$ (corresponding to 32 layers at 2 layers per region). These dimensions are chosen to illustrate the cost regimes at a scale where region-parallel training provides practical benefit. Empirical validation at this scale requires the native kernel implementation discussed in Section~\ref{sec:disc}. The compute-bound threshold on an H100-SXM5 in bf16 is approximately 295 ops/byte~\cite{nvidia-H100}. Table~\ref{tab:arithmetic-intensity-numbers} reports arithmetic intensity as a function of chunk size for both region types. 
\begin{table}[h!]
\centering
\begin{tabularx}{\linewidth}{X|c|c|c|c|c}
Region Type & $I_{\mathrm{fwd}}$ & $c=1$ & $c=16$ & $c=64$
  & H100 threshold \\
\hline
SSM ($B\!=\!8, L\!=\!2048, D\!=\!768, N\!=\!16$)
  & 7.8 & 7.8 & 125 & 500 & 295 \\
\hline
Transformer ($B\!=\!8, L\!=\!2048, D\!=\!768, H\!=\!12, X\!=\!3072$)
  & 80 & 80 & 1275 & 5100 & 295 \\
\end{tabularx}
\caption{Arithmetic intensity (ops/byte) of Jacobian construction as a function of chunk size $c$, for the illustrative configuration described above ($r=64$, bf16). Values above the H100 threshold indicate compute-bound operation. The SSM case requires $c \geq 38$ to cross the threshold, the Transformer case crosses it at $c \geq 4$.}
\label{tab:arithmetic-intensity-numbers}
\end{table}
The Transformer case is compute-bound even at modest chunk sizes because its forward pass is already moderately compute-dense ($I_\mathrm{fwd} \approx 80$). The SSM case is more demanding, where the forward pass is memory-bound ($I_\mathrm{fwd} \approx 7.8$), so reaching compute-bound operation requires chunks of $c \geq 38$. With a fused kernel that processes all $r=64$ directions simultaneously (the kernel engineering target discussed in Section~\ref{sec:disc}), both cases are firmly compute-bound.

\paragraph{Scan Negligibility.} The suffix scan over $K$ interface Jacobians of size $r \times r$ has total work $\Theta(Kr^3)$. For the illustrative configuration ($K=16, r=64$), the scan costs $Kr^3 = 16 \times 64^3 \approx 4.19 \times 10^6$ FLOPs. Total Jacobian construction for the SSM case costs $K\cdot rF_k = 16 \times 64 \times 2.01 \times 10^8 \approx 2.06 \times 10^{11}$ FLOPs, giving a scan-to-construction ratio below $10^{-4}$. The Transformer case yields a ratio below $10^{-7}$. In both cases the scan is negligible, and the dominant cost of bounded-interface backpropagation is Jacobian construction at $(r{+}1)\times$ the work of standard reverse-mode backpropagation. For comparison, a full-rank scan ($r = d = BLD \approx 1.26 \times 10^7$) would require $\Theta(d^3) \approx 2.0 \times 10^{21}$ FLOPs per combine, approximately $10^{16}\times$ the bounded-interface per-combine cost of $r^3 \approx 2.6 \times 10^5$ FLOPs.

\section{Execution \& Scheduling}\label{app:exec-and-schedule}

Algorithm~\ref{alg:bi-backprop} presents bounded-interface backpropagation as three strictly sequential phases: Jacobian construction, scan composition, and region-local backward. This ordering is correct but still conservative, and does not exploit the fact that Jacobian construction for region $k$ depends only on region $k$'s forward cache $C_k$ and the incoming interface state $m_k$, both of which are available as soon as region $k$'s forward pass completes. Algorithm~\ref{alg:bi-backprop-streaming} gives a streaming schedule that overlaps Jacobian construction with the sequential forward pass, reducing the effective wall-clock time between the end of the forward pass and the start of the scan.

\begin{algorithm}[h!]
\caption{Forward-Overlapped Bounded-Interface Backpropagation}
\label{alg:bi-backprop-streaming}
\begin{algorithmic}[1]

\Require Region transition maps $\{R_k\}_{k=0}^{K-1}$, local parameters $\{\theta_k\}_{k=0}^{K-1}$, input interface state $m_0$, terminal interface adjoint $\bar m_K$
\Ensure Interface adjoints $\{\bar m_k\}_{k=0}^{K}$ and parameter gradients $\{\nabla_{\theta_k} \mathcal{L}\}_{k=0}^{K-1}$

\Statex $\rhd$ Combined forward pass and Jacobian construction
\For{$k=0, \dotsc, K{-}1$}
    \State Compute $m_{k+1} = R_k(m_k; \theta_k)$ and store forward cache $C_k$
    \State \textbf{launch async:} Construct $J_k = \partial m_{k+1} / \partial m_k \in \mathbb{R}^{r \times r}$ from $C_k$ via AD
\EndFor
\State \textbf{synchronize} all Jacobian construction tasks

\Statex $\rhd$ Interface adjoint scan
\State Compute suffix products $\{P_k\}_{k=0}^{K}$ via parallel scan over $\{J_k^\top\}_{k=0}^{K-1}$
\ForAll{$k=0, \dotsc, K$ \textbf{in parallel}}
    \State $\bar{m}_k \leftarrow P_k \bar m_K$
\EndFor

\Statex $\rhd$ Region-local backward
\ForAll{$k=0, \dotsc, K{-}1$ \textbf{in parallel}}
    \State Compute $\nabla_{\theta_k} \mathcal{L} = (\partial m_{k+1} / \partial \theta_k)^\top \bar{m}_{k+1}$ using $C_k$
    \State Compute any additional local adjoints for region $k$
\EndFor

\Return $\{\bar m_k\}_{k=0}^K$ and $\{\nabla_{\theta_k} \mathcal{L}\}_{k=0}^{K-1}$
\end{algorithmic}
\end{algorithm}

The overlap window depends on the relative cost of the forward pass and Jacobian construction for each region. Since the forward pass costs $\Theta(F_k)$ FLOPs per region and Jacobian construction costs $\Theta(r_k \cdot F_k)$ (Eq.~\eqref{eq:bi-total-work}), Jacobian construction is approximately $r_k\times$ more expensive than a single region's forward pass. In the streaming schedule, construction for region $k$ proceeds concurrently with the forward passes of regions $k{+}1, \dotsc, K{-}1$. If the remaining $K{-}1{-}k$ forward passes provide sufficient time for region $k$'s Jacobian construction to complete, the synchronization barrier on line 5 incurs no additional wait. In the theoretical case where $K \gg r_k$, all Jacobians complete before the forward pass finishes, and the combined span of the forward pass plus Phase 1 collapses to approximately the span of the forward pass alone.

A second overlap opportunity exists between the scan and the region-local backward. Once the suffix scan produces $\bar m_k$ for a particular region $k$, that region's local backward can begin immediately without waiting for all interface adjoints. Exploiting this requires a scan implementation that exposes partial results incrementally, which is feasible with a streaming or segmented scan kernel but is not standard. We do not incorporate this optimization into Algorithm~\ref{alg:bi-backprop-streaming}.

Evaluating the wall-clock impact of these scheduling strategies requires architecture-specific kernel implementations and hardware profiling, which we defer to future systems work.

\section{Implementation \& Experimental Details}\label{app:implement-experiment-details}

\subsection{Implementation Details}\label{app:implement-details}

\paragraph{Jacobian Construction Procedure.} The core computational step unique to bounded-interface backpropagation is the construction of the region-local interface Jacobians $J_k := \partial m_{k+1}/\partial m_k \in \mathbb{R}^{r_{k+1} \times r_k}$. This Jacobian represents the local linearization of $R_k$: for any perturbation $\delta m_k \in \mathbb{R}^{r_k}$, the corresponding perturbation of the next interface state is $\delta m_{k+1} = J_k \, \delta m_k$. In general, $J_k$ is not available in closed form and must be constructed from the computation graph of $R_k$. We do so by propagating $r_k$ perturbation directions through the computation graph of $R_k$ via automatic differentiation (AD)~\citep{baydin_AD}. Let $\{e_j\}_{j=1}^{r_k}$ denote the standard basis of $\mathbb{R}^{r_k}$. A single AD pass through $R_k$ with tangent vector $e_j$ computes one column (or row) of $J_k$,
\begin{equation}\label{eq:basis-JVP}
    dR_k(m_k) \cdot e_j \in \mathbb{R}^{r_{k+1}},
\end{equation}
where $dR_k(m_k)$ denotes the differential (total derivative) of $R_k$ at $m_k$. This yields the $j$-th column (or row) of $J_k$. Repeating for all $r_k$ basis directions assembles the full Jacobian. When the underlying region kernel admits an additional batch dimension, all $r_k$ directions can be obtained simultaneously by propagating the $r_k \times r_k$ identity ($I_{r_k} \in \mathbb{R}^{r_k \times r_k}$) through the linearized computation
\begin{equation}\label{eq:jacobian-identity-construction}
    J_k = dR_k(m_k) \cdot I_{r_k},
\end{equation}
operating over tensors of shape $\mathbb{R}^{B_k \times L_k \times D_k \times r_k}$. This batched formulation amortizes kernel launch overhead and improves hardware utilization relative to $r_k$ sequential evaluations of~\eqref{eq:basis-JVP}. In our instantiation (Section~\ref{subsec:bi-architecture}), each evaluation propagates a perturbation through the composite path of Eq.~\eqref{eq:concrete-region-arch}. This step computes only the interface-to-interface Jacobian and does not require any parameter gradients, which are handled separately in the region-local backward phase (Section~\ref{subsec:backward-algorithm}).

\paragraph{Implementation and AD.} Our implementation provides two components for bounded-interface backpropagation: (i) a PyTorch-based~\citep{paszke2019pytorch} Jacobian materialization procedure that constructs each $J_k$ independently from cached boundary values through recomputation, and (ii) a dedicated CUDA suffix scan kernel that composes the $K$ interface Jacobians in a single kernel launch. Together, these enable correct end-to-end bounded-interface backpropagation as described in Algorithm~\ref{alg:bi-backprop}. Jacobian construction uses a recomputation strategy~\citep{chen2016training}: each region's cached boundary values ($m_k$ and $x_\mathrm{embed}$) are detached from the global computation graph, the batch dimension is replicated by a factor of up to $r$, the region forward map $R_k$ is re-executed on the expanded batch, and Jacobian rows are extracted via reverse-mode AD (VJPs) against the standard basis, equivalently yielding columns of $J_k^\top$. When memory is constrained, the basis dimension is processed in chunks of size $< r$, trading fewer simultaneous replicas for more forward passes. This approach is necessitated by PyTorch's execution model, which records the forward pass as a single sequential graph corresponding to the recurrence $m_0 \to R_0 \to m_1 \to \cdots \to m_K$, preventing region-local derivatives from being exposed as independent units of computation. The recomputation works around this by explicitly rebuilding independent per-region subgraphs from cached boundary values, achieving correctness but not native parallelism. Section~\ref{subsec:compute-cost} describes Jacobian construction in terms of the differential $dR_k(m_k)$, which is mode-agnostic. Both forward-mode AD (JVPs) and reverse-mode AD (VJPs) construct the same $r \times r$ Jacobian in $r$ passes at the same asymptotic cost (Eqs.~\eqref{eq:jacobian-work}--\eqref{eq:jacobian-arithmetic-intensity}). Our implementation uses reverse-mode with batch expansion, which interacts differently with the AD framework but produces identical results.

\paragraph{Systems Engineering.} Three systems capabilities named in Section~\ref{sec:disc} remain for future work and build on the algorithmic independence established in the main text. First, on fused Jacobian materialization, a dedicated kernel could fuse basis propagation with the underlying region computation in a single pass, achieving the $\Theta(r)$ arithmetic intensity described in Eq.~\eqref{eq:jacobian-arithmetic-intensity}, rather than incurring the overhead of batch replication and separate AD passes per chunk. Second, on overlapped scheduling, the dependency structure permits Jacobian construction for region $k$ to begin as soon as region $k$'s forward pass completes (Algorithm~\ref{alg:bi-backprop-streaming}, Appendix~\ref{app:exec-and-schedule}), and region-local backward for region $k$ to begin as soon as the scan produces $\bar m_k$. Exploiting this requires a streaming execution model that the current sequential graph recording does not support. Third, fused region-local backward passes that would exploit the inter-region independence established by Corollary~\ref{cor:intra-reg-local-bwd}, enabling all $K$ regions' parameter gradients to be computed concurrently.

\subsection{Experimental Details}\label{app:experiment-details}

\paragraph{Gradient Parity.} We verify that bounded-interface backpropagation (Algorithm~\ref{alg:bi-backprop}) recovers the same parameter gradients as standard reverse-mode AD applied to the same bounded-interface model. Because this test checks an implementation-level equality between two backward computations, rather than a scaling-dependent training outcome, we run it on reduced-size instances of each architecture chosen to exercise the same region decomposition, interface scan, and backend block types used in the experiments. For each architecture, we compare the full parameter-gradient vectors produced by the scan-based backward pass against the reference gradient from PyTorch's standard autograd, across 100 trials (20 random initializations $\times$ 5 batches). Table~\ref{tab:gradient-parity} reports worst-case values. The maximum absolute error, relative $\ell_2$ error, and cosine similarity confirm that the two gradient computations agree to the precision expected for each dtype. The larger discrepancies in the bfloat16 rows (Mamba-3 SISO, Hybrid) are consistent with reduced-precision arithmetic and differences in operation ordering from explicit Jacobian materialization, not from algorithmic error. All float32 architectures achieve cosine similarity $>0.99999$ with relative error below $10^{-7}$.

\begin{table}[h!]
\centering
\begin{tabular}{l|c|c|c|c}
Architecture (LBI) & $\|\Delta g\|_\infty$ & $\|\Delta g\|_2 / \|g\|_2$ & $\cos \mathrm{sim}$ & dtype \\
\hline
Mamba-2 & $1.12\times 10^{-8}$ & $2.07\times 10^{-8}$ & $>0.99999$ & float32 \\
Mamba-3 SISO & $4.88\times 10^{-4}$ & $1.66 \times 10^{-3}$ & $>0.99999$ & bfloat16 \\
Transformer & $8.94\times 10^{-8}$ & $1.41\times 10^{-7}$ & $>0.99999$ & float32 \\
Hybrid & $5.86 \times 10^{-3}$ & $1.37 \times 10^{-2}$ & $0.99991$ & bfloat16 \\
\end{tabular}
\caption{
    Gradient parity between bounded-interface backpropagation (Algorithm~\ref{alg:bi-backprop}) and standard reverse-mode AD, where all models are bounded-interface (LBI) variants of the corresponding architectures. Results are worst-case values over 100 trials per architecture (20 random initializations, 5 batches per initialization). We report the maximum absolute gradient error $\|\Delta g\|_\infty$, relative $\ell_2$ error $\|\Delta g\|_2 / \|g\|_2$, and cosine similarity between full parameter-gradient vectors.
}
\label{tab:gradient-parity}
\end{table}

\paragraph{Hyperparameter Tuning.} Bounded-interface models introduce a modified optimization problem relative to their dense counterparts: the encoder/decoder projections add new parameter groups, the interface bottleneck reshapes the loss surface, and the effective gradient scale at interface boundaries depends on the conditioning of the encoder/decoder Jacobians. We therefore treat the optimizer settings as a stability-oriented configuration rather than an exhaustive optimum. For dense baselines, we ran fixed-seed sweeps over learning rate, weight decay, and warmup, while holding the optimizer, learning-rate schedule, batch size, sequence length, tokenizer, model size, and gradient clipping fixed. We then fixed the selected architecture-specific dense configurations for all reported seeds. For LBI variants, we reuse the corresponding dense non-learning-rate optimizer settings, and perform further learning rate diagnostics where transfer from the dense configuration was unstable. Mamba-2 and Transformer transferred without modification to the LBI setting. Mamba-3 SISO required lowering the LBI learning rate from $6 \times 10^{-4}$ to $3 \times 10^{-4}$ for stable training. Hybrid used the dense learning rate ($6 \times 10^{-4}$) for $r = 16$, but a lower learning rate ($3 \times 10^{-4}$) for $r \in \{32, 64\}$. These results suggest that LBI preserves the dense training configuration to first order, but its interface-induced gradient pathway can require architecture/rank-specific learning rate adjustment. A systematic optimizer study is a natural direction for future larger-scale experiments.

\paragraph{On Mamba-3 MIMO.} We evaluate Mamba-3~\citep{mamba-3} in its SISO configuration only. The MIMO configuration augments SISO with an additional rank dimension $R > 1$ that expands the B/C projections, and introduces per-head MIMO mixing projections for the value/gating/output pathways. This increases intra-region expressivity and kernel work while preserving the same external hidden-state dimension. The bounded-interface constraint applies only at region boundaries, and intra-region computation is otherwise unrestricted, so MIMO is compatible with the LBI formulation in principle. Support for the MIMO path requires additional systems/kernel engineering and is a direction for future work.

\paragraph{Hardware/Compute.} All reported training runs used single-GPU jobs on NVIDIA H100 NVL GPUs~\citep{nvidia-H100} with 96 GB memory. Reported runs (Figures~\ref{fig:train-val-curves},~\ref{fig:region-size-sweep}, and~\ref{fig:jacobian-spectral-norm-training}) trained for 20,000 steps with batch size 1 and sequence length 1024, corresponding to 20.48M tokens per run. On this hardware, dense baselines required approximately 10-17 minutes per run, while LBI runs required approximately 55--94 minutes for $r=16$, 71--139 minutes for $r=32$, and 105--230 minutes for $r=64$, depending on architecture. The 48 total runs required approximately 71 H100-hours. Appendix region-size sweeps added approximately 41 H100-hours, Jacobian analysis runs added approximately 12 H100-hours, for roughly 124 logged H100-hours across the listed experimental workflow. These estimates exclude startup, checkpoint I/O, tuning, dataset export/tokenization, and other exploratory runs.

\section{Region Size Sensitivity}\label{app:region-size}

We sweep the number of layers per region at fixed backend parameters, training budget, and interface rank $r=32$ for Mamba-3 SISO and Transformer backends. Varying the region size changes both $K$ (the number of interface states) and the encoder/decoder parameter overhead per region. All other hyperparameters are held fixed from the main experiments (Section~\ref{sec:experiments}).

\begin{figure}[h!]
\centering
\resizebox{1.0\linewidth}{!}{%
\begin{tabular}{cc}
\begin{subfigure}{0.5\linewidth}
    \includegraphics[width=\linewidth]{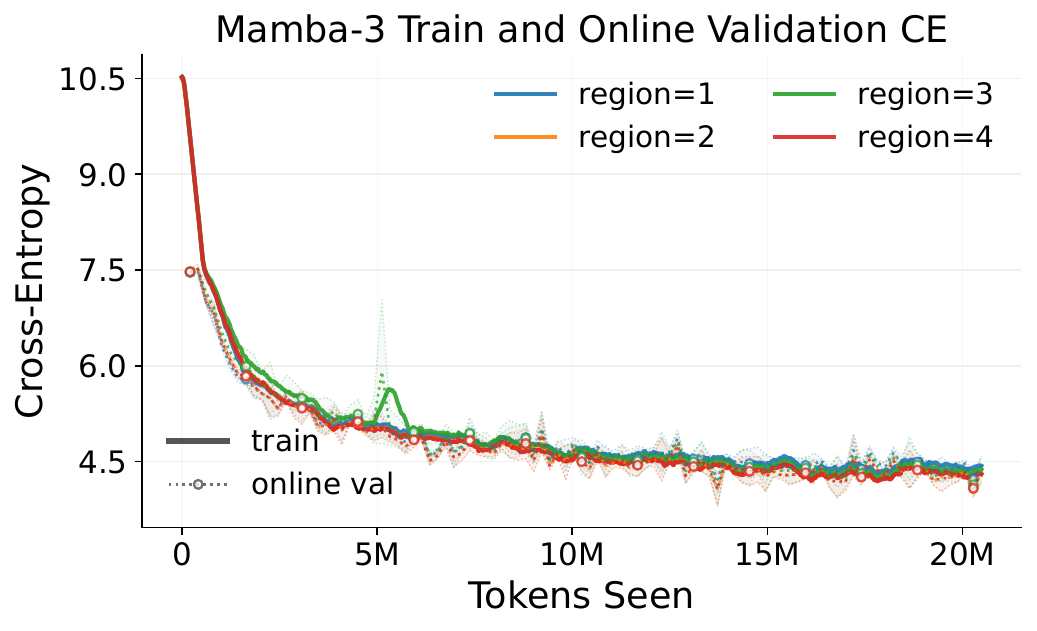}
    \caption{Mamba-3 SISO}
\end{subfigure} &
\begin{subfigure}{0.5\linewidth}
    \includegraphics[width=\linewidth]{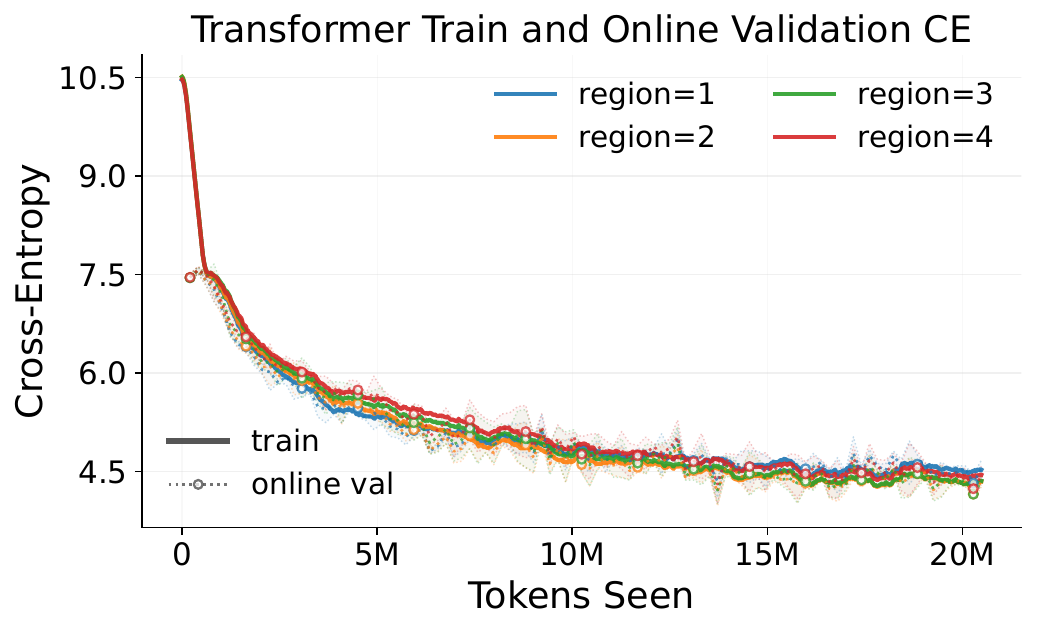}
    \caption{Transformer}
\end{subfigure}
\end{tabular}
}
\caption{Smoothed training CE (solid lines) and periodic online validation CE (markers) on FineWeb-Edu for Mamba-3 SISO (left) and Transformer (right) at interface rank $r=32$, varying the number of layers per region from 1 to 4. Training curves are smoothed with an exponential moving average for readability. Shaded regions indicate $\pm1$ standard deviation across three seeds for validation loss. Final quantitative comparisons are reported in Table~\ref{tab:region-size-val-loss-final}.}
\label{fig:region-size-sweep}
\end{figure}

\begin{table}[h!]
\centering
\small
\setlength{\tabcolsep}{4pt}
\resizebox{1.0\linewidth}{!}{%
\begin{tabular}{l|c|c|c|c|c|c}
Architecture (LBI) & Region Size & $K$ & Backend $N$ & Interface $N$ & Total $N$ & Post-Hoc Val CE $\downarrow$ \\
\hline
\multirow{4}{*}{Mamba-3 SISO}
& 1 & 14 & \multirow{4}{*}{53.52M} & 17.85M & 95.95M & $4.442 \pm 0.007$ \\
& 2 & 7 & & 9.23M & 87.33M & $4.310 \pm 0.004$ \\
& 3 & 5 & & 6.77M & 84.86M & $4.369 \pm 0.104$ \\
& 4 & 4 & & 5.54M & 83.63M
 & $4.309 \pm 0.004$ \\
\hline
\multirow{4}{*}{Transformer}
& 1 & 12 & \multirow{4}{*}{47.25M} & 6.98M & 70.62M & $4.533 \pm 0.027$ \\
& 2 & 6 & & 3.63M & 67.26M & $4.348 \pm 0.069$ \\
& 3 & 4 & & 2.51M & 66.15M & $4.349 \pm 0.123$ \\
& 4 & 3 & & 1.96M & 65.59M & $4.445 \pm 0.144$ \\
\end{tabular}
}
\caption{
    Post-hoc validation CE across region sizes for Mamba-3 SISO and Transformer at interface rank $r=32$ (same evaluation procedure as Table~\ref{tab:val-loss-final}). Region size denotes the number of consecutive layers per region, yielding $K = \lceil\#\text{ layers}/\text{region size}\rceil$ regions. When the division is uneven, the final region is truncated in the number of layers. Backend $N$ is fixed within each architecture, and the interface $N$ varies with region size due to the per-region encoder/decoder overhead. Entries report mean $\pm$ standard deviation across three seeds.
}
\label{tab:region-size-val-loss-final}
\end{table}

Region sizes of 2--4 layers yield comparable validation CE for both architectures. Region size 1 (one layer per region) incurs the largest CE for both backends (4.442 for Mamba-3, 4.533 for Transformer), which we attribute to two compounding factors: the maximum number of interface bottlenecks ($K=14$ and $K=12$ respectively), each forcing information through an $r$-dimensional compression, and the largest encoder/decoder parameter overhead (17.85M and 6.98M), which increases total parameter count without proportionally benefiting model quality. At region size 1, the encoder/decoder parameter count per interface approaches $\sim$33\% of each layer's parameter count for Mamba-3 ($\sim$1.28M vs. 3.82M), making the encoder/decoder a substantial nonlinear transformation that competes with the backend during optimization. For the Transformer, this ratio is lower ($\sim$15\%, $\sim$0.58M vs. $\sim$3.94M), indicating that parameter competition is less severe. Despite this, the Transformer also incurs its largest CE at region size 1, suggesting that the bottleneck frequency ($K=12$ compressions through an $r$-dimensional interface) is a more dominant degradation mechanism. The parameter competition may compound this effect for Mamba-3 but is secondary for the Transformer.

The overall pattern (Figure~\ref{fig:region-size-sweep} and Table~\ref{tab:region-size-val-loss-final}) shows a tradeoff between two possible failure modes. Too many interfaces (region size 1) impose excessive compression, and may distort the optimization landscape through heavy per-layer encoder/decoder overhead. Too few interfaces may provide insufficient inter-region communication points, increasing sensitivity to initialization and reducing the model's ability to propagate refinements across the network, but this degradation appears only for the Transformer at region size 4, not for Mamba-3. Importantly, increasing region size does not interpolate toward the unrestricted dense model, since the bounded-interface architecture rebuilds each region's hidden state from $x_\mathrm{embed} + \operatorname{Dec}_k(m_k)$ regardless of region size, so inter-region communication remains $r$-dimensional even when individual regions are large. The sweet spot of 2--3 layers per region in our experiments balances compression frequency against communication opportunity, which is a practical property since it means the framework does not require careful tuning of this hyperparameter.

\section{Interface Jacobian Structure}\label{app:jacobian-structure}

We report the spectral properties of the interface Jacobians $\{J_k\}$ throughout training for Mamba-3 SISO LBI at ranks $r \in \{16, 32, 64\}$, using the main experimental configuration (region size 2, $K=7$ regions). Figure~\ref{fig:jacobian-spectral-norm-training} tracks the local and suffix-composed spectral norms across training steps. Table~\ref{tab:jacobian-norm-final-checkpoint} reports summary statistics at the final checkpoint.

\begin{figure}[h!]
\centering
\includegraphics[width=\linewidth]{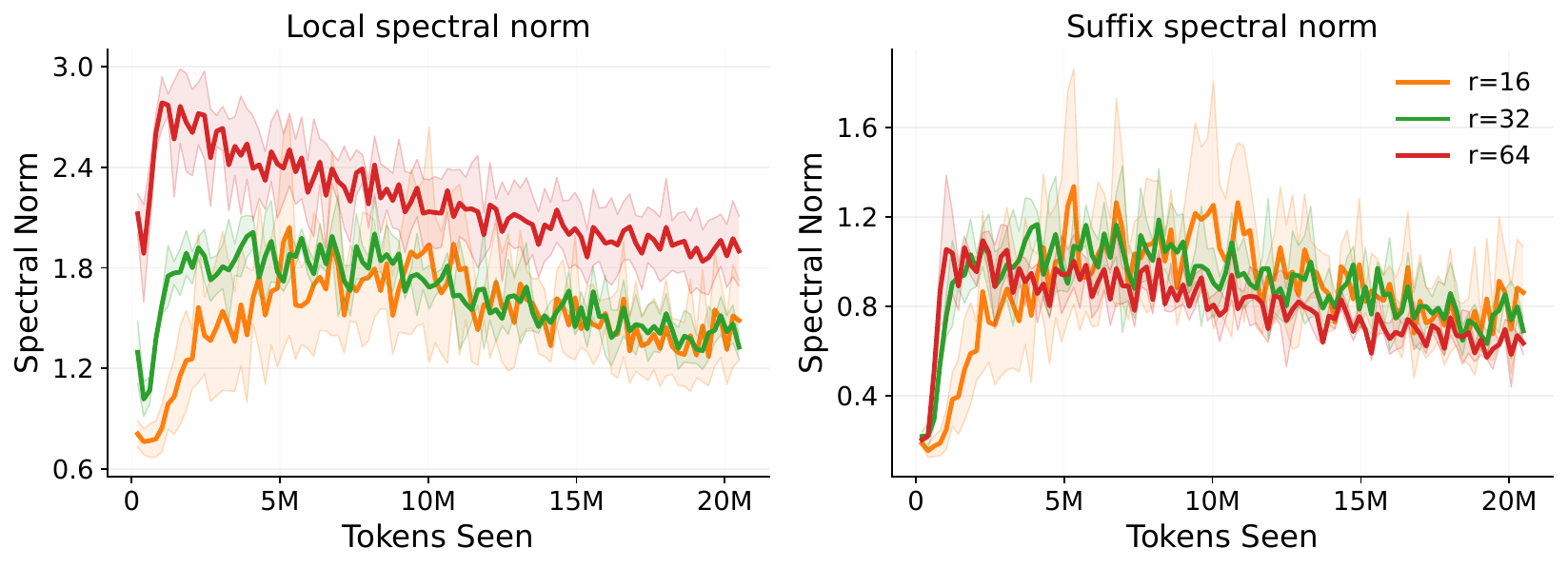}
\caption{Spectral norm of interface Jacobians throughout Mamba-3 SISO LBI training at ranks $r \in \{16, 32, 64\}$ with region size 2. Left: mean local spectral norm $\|J_k\|_2$ averaged across regions. Right: mean spectral norm of the suffix-composed interface Jacobian $\|P_k\|_2 = \|\prod_{j=k}^{K-1} J_j^\top\|_2$, averaged across suffix indices. Both quantities are averaged over three seeds, with shaded regions indicating $\pm 1$ standard deviation. The first logged step is omitted to remove the initialization transient.}
\label{fig:jacobian-spectral-norm-training}
\end{figure}

\begin{table}[h!]
\centering
\begin{tabular}{c|c|c|c}
Rank $r$ & Local $\|J_k\|_2$ & Suffix $\|P_k\|_2$ & Local $\|J_k\|_F/\sqrt{r}$ \\
\hline
16 & 1.486 $\pm$ 0.293 & 0.862 $\pm$ 0.252 & 0.708 $\pm$ 0.190 \\
32 & 1.327 $\pm$ 0.084 & 0.690 $\pm$ 0.071 & 0.591 $\pm$ 0.036 \\
64 & 1.901 $\pm$ 0.257 & 0.635 $\pm$ 0.060 & 0.749 $\pm$ 0.103 \\
\end{tabular}
\caption{Final-checkpoint interface Jacobian statistics for Mamba-3 SISO LBI at ranks $r \in \{16, 32, 64\}$ with region size 2. Local $\|J_k\|_2$ is the spectral norm of individual region-to-region interface Jacobians averaged across regions, suffix $\|P_k\|_2$ is the spectral norm of the suffix-composed interface Jacobians averaged across suffix indices, and local $\|J_k\|_F/\sqrt{r}$ is the rank-normalized Frobenius norm (equal to the RMS singular value when $J_k$ is square). All values are computed from the final logged training step and reported as mean $\pm$ standard deviation across three seeds. The initialization transient is again omitted.}
\label{tab:jacobian-norm-final-checkpoint}
\end{table}

\paragraph{Spectral Analysis.} We make three observations. First, individual region transitions have local spectral norms slightly above 1 (1.3--1.9 across ranks), indicating moderate amplification along the leading singular direction. However, the suffix-composed Jacobians (which govern the actual gradient transport through the interface chain) converge to spectral norms consistently below 1 (Figure~\ref{fig:jacobian-spectral-norm-training}), meaning that the composed interface map is contractive. Gradients propagated through the full interface chain are attenuated rather than amplified, which is consistent with the stable training observed in Section~\ref{sec:experiments}. The contraction of the composed map despite local amplification implies that the leading singular directions of adjacent Jacobians are not aligned, where amplification in one region is not reinforced by the next and the composed product is dominated by the contractive bulk of the spectrum rather than the leading singular values. Second, both local and suffix spectral norms stabilize rapidly after an initial transient and remain bounded throughout training for all three ranks. This suggests that the gradient transport structure of the interface chain is determined early on in training and maintained thereafter, rather than drifting or deteriorating. Third, the gap between the local spectral norm $\|J_k\|_2$ and the rank-normalized Frobenius norm $\|J_k\|_F/\sqrt{r}$ reveals spectral concentration within the interface Jacobians. For $r=64$, the spectral norm (1.901) is approximately 2.5$\times$ the RMS singular value (0.749), indicating that a few singular directions carry disproportionate energy while most directions are attenuative. This concentration is consistent with the observation from Section~\ref{sec:experiments} that small $r$ is sufficient, where even within the $r$-dimensional interface, the effective communication concentrates in a low-dimensional subspace. The canvas mechanism (Section~\ref{subsec:bi-architecture}) may contribute to this concentration, since the embedding already carries $D$-dimensional token information to every region, the interface communicates refinements that plausibly occupy a lower-dimensional subspace than the full hidden state would require.

\paragraph{Further Characterization.} We report results only for Mamba-3 SISO, so the Jacobian structure may differ for Mamba-2, Transformer, or Hybrid backends where the region transformation $\Phi_k$ has different spectral properties. A full per-region breakdown of $J_k$ would reveal whether early or late regions are systematically more or less contractive. The full singular value spectrum of each $J_k$ (beyond the leading value and the RMS) would characterize the effective rank of inter-region communication more precisely. Finally, the layer normalization in Eq.~\eqref{eq:concrete-region-arch} likely plays a stabilizing role in keeping the suffix norms bounded, since the normalizing effect on the pre-interface state constrains the output scale of each region transition. Characterizing the interaction between the LN, the learned scaling $\alpha_k$, and the encoder/decoder architecture in determining Jacobian conditioning is a direction for future work that connects to the open design problem discussed in Section~\ref{sec:disc}.

\end{document}

%% file: preamble/packages.tex
\usepackage{amsmath}
\usepackage{amssymb}
\usepackage{mathtools}
\usepackage{amsthm}

\usepackage{algorithm}
\usepackage{algpseudocode}

\usepackage[textsize=tiny]{todonotes}

\usepackage{enumitem}

 \usepackage[preprint]{neurips_2026}

\usepackage[utf8]{inputenc} %
\usepackage[T1]{fontenc}    %
\usepackage{hyperref}       %
\usepackage{url}            %
\usepackage{booktabs}       %
\usepackage{amsfonts}       %
\usepackage{nicefrac}       %
\usepackage{microtype}      %
\usepackage{xcolor}         %
\usepackage{multirow}
\usepackage{subcaption}
\usepackage{tabularx}

%% file: preamble/macros.tex
\theoremstyle{plain}
\newtheorem{theorem}{Theorem}[section]
\newtheorem{proposition}[theorem]{Proposition}
\newtheorem{lemma}[theorem]{Lemma}
\newtheorem{corollary}[theorem]{Corollary}
\theoremstyle{definition}
\newtheorem{definition}[theorem]{Definition}

\theoremstyle{remark}
\newtheorem{remark}[theorem]{Remark}

%% file: sections/00-abstract.tex
\begin{abstract}

Backpropagation is inherently sequential across depth, creating an $O(K)$-deep dependency chain that bottlenecks parallel training. While parallel-scan formulations theoretically reduce this depth to $O(\log K)$, they are computationally prohibitive for modern architectures due to the $O(d^3)$ cost of composing full-rank $d\times d$ Jacobians over the entire hidden state. We introduce Latent Bounded Interfaces (LBI), an algorithmic formulation that makes scan-based backpropagation tractable by restricting inter-region communication to a low-dimensional latent interface, $ m_k \in \mathbb{R}^{r}$, where $r \ll d$. This reduces the adjoint recursion to a suffix scan over $r \times r$ Jacobians, cutting per-combine cost from $O(d^3)$ to $O(r^3)$ while preserving exact gradients under the bounded-interface model. We demonstrate that LBI maintains model quality across four architectures (Mamba-2, Mamba-3, Transformer, and a Mamba--Transformer hybrid) at 47--61M block parameters. Interfaces of dimension $r=16$ suffice to preserve training quality within 0.16--0.35 cross entropy of dense baselines. The resulting framework provides an algorithmic foundation for region-parallel training, reducing cross-device backward communication to a single scan over $K$ fixed-size matrices, of approximately 56 KB for our experimental configurations.

\end{abstract}

%% file: sections/01-introduction.tex
\section{Introduction}\label{sec:intro}

Backpropagation in deep neural networks is inherently sequential across layers: the adjoint at layer $k$ depends on the adjoint at layer $k{+}1$, which depends on layer $k{+}2$ and so on. This sequential dependency chain has depth $O(K)$ in the number of regions (or layers) $K$, imposing a fundamental algorithmic bottleneck on parallel training. Prior efforts to break this sequential chain have taken broadly complementary approaches. Exact scan-based approaches, such as BPPSA~\citep{exact-para-scan-bp}, theoretically reduce the $O(K)$ dependency chain to $O(\log K)$, but are computationally prohibitive due to the $O(d^3)$ cost of composing $d \times d$ Jacobians (where $d = BLD$ encompasses the full batch, sequence, and feature dimensions). Iterative~\citep{iterative-residual,lim2024parallelizingnonlinearsequentialmodels,scale-parallel-nRNN,layer-para-rnn}, decoupled~\citep{de-info-reg,decoupled-neural-interfaces}, and Jacobian-Free~\citep{jfb} approaches sacrifice gradient exactness for throughput. Pipeline parallelism~\citep{interlocking-backprop,chimera,GPipe} uses micro-batch scheduling to improve hardware utilization but leaves the $O(K)$ sequential depth intact. 

We observe that the intractability of scan-based backpropagation is not inherent to the scan formulation itself, but to the size of the objects being scanned. Restricting inter-region communication to a low-dimensional \emph{interface state} $m_k \in \mathbb{R}^r$ reduces the adjoint recursion to a suffix scan over $r \times r$ Jacobians, preserving the $O(\log K)$-depth associative structure identified by BPPSA while cutting per-combine cost from $O(d^3)$ to $O(r^3)$ with $r \ll d$. All remaining gradient computation (parameter gradients and intra-region adjoints) becomes embarrassingly parallel across regions once the interface adjoints are available.

This approach instantiates a broader principle: that inter-component communication in neural networks admits learned low-dimensional compression. Multi-Head Latent Attention (MLA)~\citep{deepseek-v2} demonstrated this for cross-token attention, where attention can be compressed to a low-dimensional latent state without significant quality loss provided that the compression is native to the model and learned end-to-end. We apply the same principle to inter-region communication in depth. Rather than transmitting the full hidden state across region boundaries, we learn a compact interface carrying only the information needed by downstream computation. The resulting architecture, which we call a \emph{bounded-interface model}, represents an architecture--algorithm co-design: it is both a modeling choice (that imposes a structural information bottleneck) and an algorithmic enabler (that reduces scan operator size from $d \times d$ to $r \times r$). We make the following contributions:

\begin{enumerate}[leftmargin=*]
    \item \textbf{Tractable Scan Backpropagation via Bounded Interfaces.} We restrict inter-region communication to an $r$-dimensional interface and prove that the resulting adjoint recursion admits an exact suffix scan over $r\times r$ Jacobians, reducing per-combine cost from $O(d^3)$ to $O(r^3)$ with $O(\log K)$ depth. Our algorithmic foundation decouples the scan's computational cost from the model's internal dimensions, making depth-parallel backpropagation tractable for modern architectures.
    \item \textbf{Three-Phase Decomposition with Region Independence.} We show that bounded-interface backpropagation decomposes into three phases: embarrassingly parallel Jacobian construction, a lightweight suffix scan, and embarrassingly parallel region-local backward, where the scan is the sole sequential bottleneck at $O(r^3\log K)$ span. Once interface adjoints are available, all region-local computations are independent, enabling flexible distributed scheduling.
    \item \textbf{Exactness without Approximation.} We verify that the resulting gradients are identical to standard reverse-mode backpropagation under the bounded-interface model. Unlike prior approaches, the scan-based backward pass does not introduce approximation, staleness, or auxiliary objectives.
    \item \textbf{Model-Agnostic Realization and Empirical Validation.} We instantiate bounded-interface models across four architectures (Mamba-2, Mamba-3, Transformer, Hybrid) via a simple encoder/decoder construction that compresses inter-region communication to a fixed-dimensional interface, while leaving intra-region computation unrestricted. We demonstrate that interfaces of dimension $r=16$ suffice to preserve training quality within 0.16--0.35 cross entropy of dense baselines, establishing the practical viability of LBI. The code is available at \url{https://github.com/shaunlee8/latent-bounded-interfaces}.
\end{enumerate}

%% file: sections/03-mathematical-framework.tex
\section{Scan Formulation}\label{sec:math}

We formalize the notion of a bounded interface as a low-dimensional vertex separator in the computational graph through which all inter-region dependencies must pass, and show that this structure implies a factorization of both forward computation and reverse-mode adjoints.

\subsection{Graph and Interface Factorization}

\begin{definition}[Region-Partitioned Computational Graph]\label{def:region-partition}
Let $\mathcal{G} = (\mathcal{V}, \mathcal{E})$ denote the directed acyclic computational graph of a model, where $\mathcal{V}$ are the graph nodes and $\mathcal{E}$ are directed edges representing data dependencies. A \emph{region partition} of $\mathcal{G}$ is a disjoint decomposition,
$\mathcal{V} = \bigsqcup_{k=0}^{K-1} \mathcal{R}_k$
that is \emph{topologically ordered}: for every edge $(u \to v) \in \mathcal{E}$ with $u \in \mathcal{R}_k$ and $v \in \mathcal{R}_j$, we have $k \leq j$. This ensures that inter-region dependencies flow in a single forward direction and permits arbitrary region boundaries consistent with the DAG, including partitions that do not align with architectural layers. For each boundary between regions $\mathcal{R}_{k-1}$ and $\mathcal{R}_k$, we associate an \emph{interface state,} $m_k \in \mathbb{R}^{r_k}$, where $r_k$ denotes the interface dimension, which controls the amount of information communicated across region boundaries.
\end{definition}

\begin{definition}[Interface Sufficiency]\label{def:sufficiency}
An interface state $m_k$ is \emph{sufficient} at boundary $k$ if for every node $v \in \mathcal{R}_{\geq k}$, there exists a function $\tilde{f}_v$ such that $v = \tilde{f}_v(m_k; \theta_{\geq k})$. That is, all dependence of $v$ on variables in $\mathcal{R}_{<k}$ factors exclusively through $m_k$.
\end{definition}

Sufficiency holds whenever $m_k$ forms a \emph{vertex separator} between $\mathcal{R}_{<k}$ and $\mathcal{R}_{\geq k}$ in the computational graph, that is, every directed path from $\mathcal{R}_{<k}$ to $\mathcal{R}_{\geq k}$ passes through nodes whose values determine $m_k$. A formal proof that a vertex separator implies sufficiency (Definition~\ref{def:sufficiency}) is given in Appendix~\ref{app:vertex-separators} (Lemma~\ref{lem:separator-sufficiency}). In our construction (Section~\ref{sec:impl}), sufficiency holds by design. The architecture routes all inter-region communication exclusively through the interface states, with no bypass paths.

\begin{lemma}[Adjoint Factorization Through Interfaces]\label{lem:adjoint-factorization}
If $m_k$ is sufficient at boundary $k$, then for any loss $\mathcal{L}$ computed from variables in $R_{\geq k}$,
\begin{equation}\label{eq:adjoint-factor}
    \frac{\partial \mathcal{L}}{\partial \mathcal{R}_{<k}} = \left(\frac{\partial m_k}{\partial \mathcal{R}_{<k}}\right)^\top \frac{\partial \mathcal{L}}{\partial m_k}.
\end{equation}
That is, all reverse-mode adjoints from $\mathcal{R}_{\geq k}$ to $\mathcal{R}_{<k}$ factor exclusively through the interface $m_k$.
\end{lemma}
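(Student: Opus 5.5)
The plan is to reduce the statement to the multivariate chain rule applied to a composite function, using sufficiency to supply the factorization. Fix boundary $k$ and regard the loss as a function of the values in $\mathcal{R}_{\geq k}$. Since $\mathcal{L}$ is computed from variables in $\mathcal{R}_{\geq k}$, we can write $\mathcal{L} = \ell(\{v\}_{v \in \mathcal{R}_{\geq k}})$ for some differentiable $\ell$. By Definition~\ref{def:sufficiency}, each such $v$ equals $\tilde f_v(m_k;\theta_{\geq k})$. Substituting gives $\mathcal{L} = g(m_k;\theta_{\geq k})$, where $g := \ell \circ (\tilde f_v)_{v}$. Separately, $m_k$ is itself a function of the upstream values, $m_k = h(\mathcal{R}_{<k})$. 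This holds because its components are node values determined by $\mathcal{R}_{<k}$ (and, in the architecture of Section~\ref{sec:impl}, by the encoder output). Hence the dependence of $\mathcal{L}$ on $\mathcal{R}_{<k}$ is exactly the composite $g(h(\mathcal{R}_{<k});\theta_{\geq k})$, with $\theta_{\geq k}$ held fixed.

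Next, I will apply the chain rule to this composite. Doing so gives $\partial \mathcal{L}/\partial \mathcal{R}_{<k} = (\partial h/\partial \mathcal{R}_{<k})^\top \nabla_{m_k} g$. Here $\partial h/\partial\mathcal{R}_{<k} = \partial m_k/\partial \mathcal{R}_{<k}$, and $\nabla_{m_k} g = \partial \mathcal{L}/\partial m_k$, which is the total derivative of the loss with respect to the interface. Together these yield \eqref{eq:adjoint-factor}. The reverse-mode statement then follows, because reverse-mode AD computes exactly these total derivatives: every adjoint contribution reaching $\mathcal{R}_{<k}$ from $\mathcal{R}_{\geq k}$ is a sum over paths, and each path crosses the separator, so the sum regroups as a product through $m_k$.

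The main obstacle is making the total derivative $\partial \mathcal{L}/\partial m_k$ well defined and independent of representation. Two issues arise.
\begin{enumerate}
\item Sufficiency only asserts the \emph{existence} of some $\tilde f_v$, so I need $\tilde f_v$ to be differentiable and canonical. I would handle this by taking $\tilde f_v$ to be the function constructed in the proof of Lemma~\ref{lem:separator-sufficiency}: the composition of node operations along $\mathcal{R}_{\geq k}$ in topological order, with separator nodes as inputs. This $\tilde f_v$ is differentiable whenever the node operations are.
\item Nodes of $\mathcal{R}_{<k}$ that feed $\mathcal{L}$ only through $m_k$ must not also have a direct path to the loss. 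This is guaranteed because the hypothesis restricts $\mathcal{L}$ to variables in $\mathcal{R}_{\geq k}$.
\end{enumerate}
If $m_k$ consists of separator nodes that lie in $\mathcal{R}_{\geq k}$, a small care point remains. The formula should then be read with $m_k$ treated as an independent input to the downstream map and $\partial m_k/\partial\mathcal{R}_{<k}$ as the upstream Jacobian. I would state this convention explicitly before invoking the chain rule.
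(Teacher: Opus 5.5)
Your proposal is correct and follows the paper's route: sufficiency lets you write $\mathcal{L}=\tilde{\mathcal{L}}(m_k;\theta_{\geq k})$, and the chain rule through $m_k$, with $\theta_{\geq k}$ held fixed, gives Eq.~\eqref{eq:adjoint-factor} for every $w\in\mathcal{R}_{<k}$. Your extra care goes beyond the paper's shorter proof in two places: you fix $\tilde f_v$ as the canonical downstream composition from Lemma~\ref{lem:separator-sufficiency}, so it is differentiable and the factorization survives perturbations of intermediate nodes, and you read $\partial\mathcal{L}/\partial m_k$ with $m_k$ as an independent input; both points are assumed without comment there.
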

\begin{proof}
Since $m_k$ is sufficient, the loss can be written as $\mathcal{L} = \tilde{\mathcal{L}}(m_k; \theta_{\geq k})$ for some function $\tilde{\mathcal{L}}$ (Definition~\ref{def:sufficiency}). For any variable $w \in \mathcal{R}_{<k}$, the parameters $\theta_{\geq k}$ are independent of $w$, so $\mathcal{L}$ depends on $w$ only through $m_k$. By the chain rule, $\partial\mathcal{L}/\partial w = \left(\partial m_k/\partial w\right)^\top \partial\mathcal{L}/\partial m_k$. Since this holds for every $w \in \mathcal{R}_{<k}$, Eq.~\eqref{eq:adjoint-factor} follows.
\end{proof}

\begin{definition}[Bounded-Interface Model]\label{def:bounded-interface-model}
A region-partitioned computational graph is said to satisfy the \emph{bounded-interface property} if, for every boundary $k$, there exists a sufficient interface state $m_k$ with $\dim(m_k) \leq r_k$ for some prescribed interface budget $r_k$ independent of the total number of internal variables in the adjacent regions.
\end{definition}
\begin{remark}[Imposed vs. Discovered Interfaces]\label{rem:imposed-discovered-interfaces}
By Menger's theorem~\citep{bohme2001menger}, the minimum vertex separator between $\mathcal{R}_{<k}$ and $\mathcal{R}_{\geq k}$ in a typical neural network computational graph has size $d_k$ (the full hidden state) since each scalar component admits an independent path across the boundary under non-degeneracy (Appendix~\ref{app:vertex-separators}). The bounded-interface property is therefore an \emph{architectural imposition}, not a consequence of the model's existing structure. We choose to communicate $r_k \ll d_k$ scalars where exact sufficiency would formally require $d_k$. The architectural and empirical factors that make this compression viable are addressed in Sections~\ref{sec:impl} and~\ref{sec:experiments}.
\end{remark}
\begin{corollary}[Sequential Interface Representation]\label{cor:interface-chain}
For a bounded-interface model, all inter-region dependence factors through the interface chain $m_0 \to m_1 \to \cdots \to m_K$. In particular, inter-region forward computation is completely characterized by the transition maps,
\begin{equation}\label{eq:region-transition}
    m_{k+1} = R_k(m_k; \theta_k),
\end{equation}
where $R_k$ denotes the interface transition map associated with region $\mathcal{R}_k$, while all remaining computation is local to its corresponding region with local parameters $\theta_k$. By Lemma~\ref{lem:adjoint-factorization}, reverse-mode transport across regions must therefore also factor through this chain.
\end{corollary}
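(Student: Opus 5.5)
We prove Corollary~\ref{cor:interface-chain} by applying Definition~\ref{def:sufficiency} at each boundary in turn and combining the result with the topological ordering of Definition~\ref{def:region-partition}. Fix a bounded-interface model with sufficient interface states $m_0,\dots,m_K$ (Definition~\ref{def:bounded-interface-model}). We read $m_{k+1}$ as a quantity computed from nodes of $\mathcal{R}_{\leq k}$ that is consumed by $\mathcal{R}_{\geq k+1}$.

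\textbf{Step 1: the transition maps.} Every node $v \in \mathcal{R}_k \subseteq \mathcal{R}_{\geq k}$ can be written as $v = \tilde f_v(m_k;\theta_{\geq k})$ by sufficiency at boundary $k$. To obtain $R_k$ we must show that the values of region $\mathcal{R}_k$ depend on $\theta_{\geq k}$ only through $\theta_k$. We argue this by the same topological induction as in Lemma~\ref{lem:separator-sufficiency}, restricted to $\mathcal{R}_k$. Take a node $v \in \mathcal{R}_k$. Each parent $u$ of $v$ lies in $\mathcal{R}_{\leq k}$, since edges never point backward. If $u \in \mathcal{R}_{<k}$, then the value of $u$ reaches $v$ only through $m_k$. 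If $u \in \mathcal{R}_k$, then $u$ precedes $v$ and the inductive hypothesis applies. No node of $\mathcal{R}_{>k}$, and hence no parameter $\theta_j$ with $j>k$, can feed into $\mathcal{R}_k$. Therefore $v$ is a function of $(m_k,\theta_k)$ alone.

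Now $m_{k+1}$ is determined by nodes in $\mathcal{R}_{\leq k}$ that sit on the separator side of boundary $k+1$. Sufficiency at boundary $k$ lets us replace the contributions from $\mathcal{R}_{<k}$ by $m_k$, and the previous paragraph handles the contributions from $\mathcal{R}_k$. Together these give $m_{k+1} = R_k(m_k;\theta_k)$, which is Eq.~\eqref{eq:region-transition}.

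\textbf{Step 2: unrolling the chain.} We show by induction on $j$ that every node in $\mathcal{R}_j$ is a function of $m_0$ and $\theta_{\leq j}$ through the composition $R_{j-1}\circ\cdots\circ R_0$, followed by a function local to region $j$ of $(m_j,\theta_j)$. This is the precise sense in which all inter-region dependence factors through $m_0\to m_1\to\cdots\to m_K$, and in which all remaining computation is local to its region.

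\textbf{Step 3: reverse-mode transport.} For each $k$, Lemma~\ref{lem:adjoint-factorization} gives $\partial\mathcal{L}/\partial\mathcal{R}_{<k} = (\partial m_k/\partial\mathcal{R}_{<k})^\top\,\partial\mathcal{L}/\partial m_k$. Specialising to $w=m_{k-1}$ and using Step 1 yields $\bar m_{k-1} = (\partial R_{k-1}/\partial m_{k-1})^\top \bar m_k$ for the part of the adjoint that crosses the boundary. Hence cross-region adjoints also factor through the chain.

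\textbf{Main obstacle.} The hardest step is the second half of Step 1: justifying that $m_{k+1}$ depends only on $(m_k,\theta_k)$ and not on earlier parameters $\theta_{<k}$ through some separate channel. Definition~\ref{def:sufficiency} constrains only nodes in $\mathcal{R}_{\geq k}$, while $m_{k+1}$ is produced at the end of $\mathcal{R}_k$. We need $m_{k+1}$ either to be a node of $\mathcal{R}_k$, or to be a function of nodes of $\mathcal{R}_k$ together with quantities that already pass through $m_k$.

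We would first settle this by convention: place the interface node $m_{k+1}$ in $\mathcal{R}_k$, as the encoder output of Section~\ref{sec:impl} does. With that placement, Step 1 applies verbatim. A canvas input such as $x_\mathrm{embed}$ that every region reads would break the argument, so we would treat it as an external input bundled into $m_0$ or into the parameters.
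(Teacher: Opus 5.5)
Your proposal is correct and follows the same route as the paper. Sufficiency at boundary $k$, plus the fact that $m_{k+1}$ is computed from nodes of $\mathcal{R}_k$ and $m_k$ (which the paper simply asserts and you make an explicit placement convention), gives $m_{k+1}=R_k(m_k;\theta_k)$; induction over $k$ then yields the chain, and Lemma~\ref{lem:adjoint-factorization} handles reverse-mode transport. Your topological-ordering argument for discarding $\theta_{>k}$ fills in a step the paper leaves implicit. The one slip is in your aside: bundling $x_\mathrm{embed}$ into $m_0$ would not restore sufficiency at boundaries $k\geq 1$, so treating it as an external input alongside the parameters (as the paper effectively does) is the option that works.
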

\begin{proof}
Fix $k \in \{0, \dotsc, K{-}1\}$. By sufficiency at boundary $k$ (Definition~\ref{def:sufficiency}), every node in $\mathcal{R}_{\geq k}$ (in particular every node internal to $\mathcal{R}_k$) depends on $\mathcal{R}_{<k}$ only through $m_k$. The outgoing interface state $m_{k+1}$ is computed from nodes in $\mathcal{R}_k$ together with the incoming interface $m_k$. Since all of these depend on $\mathcal{R}_{<k}$ only through $m_k$, the state $m_{k+1}$ is determined by $m_k$ and the local parameters $\theta_k$ alone, yielding the transition map~\eqref{eq:region-transition}. Applying this argument inductively for $k = 0, \dotsc, K{-}1$ produces the interface chain $m_0 \to m_1 \to \cdots \to m_K$. By Lemma~\ref{lem:adjoint-factorization}, all reverse-mode adjoints across region boundaries also factor through this chain.
\end{proof}

Corollary~\ref{cor:interface-chain} establishes that both forward and backward computation across regions reduce to operations on the interface chain $m_0 \to m_1 \to \cdots \to m_K$ rather than on the full hidden state. Since each interface transition $m_k \to m_{k+1}$ is a differentiable map between fixed-dimensional spaces, the inter-region adjoint recursion $\bar{m}_k = J_k^\top \bar{m}_{k+1}$ composes linear operators of size $r \times r$, an associative structure that admits a parallel scan formulation.

\subsection{Scan Parallel Backpropagation}\label{subsec:scan-parallel-backprop}

Backpropagation over bounded interface models follows directly from Lemma~\ref{lem:adjoint-factorization} and Corollary~\ref{cor:interface-chain}, admitting a parallel scan formulation and span advantage over standard sequential backpropagation. For simplicity in analysis and instantiation (Sections~\ref{subsec:scan-parallel-backprop}--\ref{sec:experiments}), we assume a uniform interface width $r_k = r$. The variable-width case is analogous. We refer to the propagation of adjoints across region boundaries via Jacobian transposes as \emph{adjoint transport}. Concretely, for a region transition $m_{k+1} = f_k(m_k)$, $\bar m_k = (\partial m_{k+1}/ \partial m_k)^\top \bar m_{k+1}$. The interface chain (Corollary~\ref{cor:interface-chain}) is seeded by an initial state $m_0$ and terminates at $m_K$. We assume that the loss $\mathcal{L}$ depends on all preceding regions exclusively through $m_K$, so that $\bar{m}_K = \partial\mathcal{L}/\partial m_K$ is well-defined and nonzero.

\begin{theorem}[Bounded-Interface Backpropagation]\label{thm:bi-backprop}
Let $\mathcal{G}$ be a region-partitioned computational graph satisfying the bounded-interface property, with region transitions as in Eq.~\eqref{eq:region-transition}, and let $\mathcal{L}$ be a scalar loss depending only on variables in the final region $\mathcal{R}_{K-1}$. 

Define the interface adjoint $\bar{m}_k := \partial \mathcal{L}/\partial m_k \in \mathbb{R}^{r}$. Then, for every $k=0, \dotsc, K-1$, 
$    \bar{m}_k = \left(\frac{\partial m_{k+1}}{\partial m_k}\right)^\top \bar{m}_{k+1},
$
and the local parameter gradient factorizes as $    \nabla_{\theta_k}\mathcal{L} = \left(\frac{\partial m_{k+1}}{\partial \theta_k}\right)^\top \bar{m}_{k+1}.
$
Moreover, writing $J_k := \partial m_{k+1} / \partial m_k \in \mathbb{R}^{r \times r}$, where the operator $J_k$ (the Jacobian) is the differential of the region transition map evaluated at the forward state $m_k$, the inter-region adjoint admits the exact product form,
\begin{equation}\label{eq:intr-reg-adj-prod}
    \bar{m}_k = \left(\prod_{j=k}^{K-1} J_j^\top \right) \bar{m}_K, \quad k=0, \dotsc, K-1,
\end{equation}
with the product ordered as $J^\top_k J^\top_{k+1}\cdots J^\top_{K-1}$, and the empty product ($k=K$) is the identity.
\end{theorem}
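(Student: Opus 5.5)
The plan is to reduce everything to the chain rule applied to the interface chain of Corollary~\ref{cor:interface-chain}. First, I fix the setup. By Corollary~\ref{cor:interface-chain}, $m_{k+1} = R_k(m_k;\theta_k)$ for each $k$. The loss depends on $\mathcal{R}_{<K}$ only through $m_K$, by the standing assumption and sufficiency. Hence there is a function $\Lambda_k$ with $\mathcal{L} = \Lambda_k(m_k;\theta_k,\dots,\theta_{K-1})$, defined recursively by $\Lambda_K(m_K) := \tilde{\mathcal{L}}(m_K)$ and $\Lambda_k(m_k;\theta_{\ge k}) := \Lambda_{k+1}(R_k(m_k;\theta_k);\theta_{\ge k+1})$. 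With this representation, $\bar m_k := \partial\mathcal{L}/\partial m_k$ is unambiguous: it is the gradient of $\Lambda_k$ in its first argument, evaluated at the forward state.

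For the adjoint recursion, I apply Lemma~\ref{lem:adjoint-factorization} at boundary $k+1$ with $w$ ranging over the components of $m_k$; these are variables in $\mathcal{R}_{<k+1}$. Equivalently, I differentiate the composition $\Lambda_k = \Lambda_{k+1}\circ R_k$ in $m_k$. Both routes give $\bar m_k = J_k^\top \bar m_{k+1}$. For the parameter gradient, I use the fact that $\theta_k$ enters $\mathcal{L}$ only through $R_k$. This holds because, by sufficiency at boundary $k+1$, every node in $\mathcal{R}_{\ge k+1}$ is a function of $(m_{k+1};\theta_{\ge k+1})$, and $\theta_k$ is local to region $k$. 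Differentiating $\Lambda_{k+1}(R_k(m_k;\theta_k);\theta_{\ge k+1})$ in $\theta_k$, with $m_k$ held fixed, then gives $\nabla_{\theta_k}\mathcal{L} = (\partial m_{k+1}/\partial\theta_k)^\top \bar m_{k+1}$. One point must be checked here: $m_k$ does not depend on $\theta_k$. This follows from Corollary~\ref{cor:interface-chain}, since $m_k$ is determined by $m_0$ and $\theta_{<k}$.

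The product form follows by downward induction on $k$. The base case $k=K$ is the empty product, which is the identity. For the inductive step, assume $\bar m_{k+1} = J_{k+1}^\top\cdots J_{K-1}^\top \bar m_K$ and substitute this into the recursion $\bar m_k = J_k^\top \bar m_{k+1}$. This yields the stated left-to-right ordering $J_k^\top J_{k+1}^\top\cdots J_{K-1}^\top$. The ordering comes from $(AB)^\top = B^\top A^\top$ applied to the forward Jacobian product $\partial m_K/\partial m_k = J_{K-1}\cdots J_k$, which serves as a consistency check.

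I expect the main obstacle to be the parameter-gradient claim, not the recursion. The local parameters $\theta_k$ may feed nodes of $\mathcal{R}_k$ that influence $\mathcal{L}$ other than through $m_{k+1}$, and the paper's notion of ``local'' must exclude that. Since $\mathcal{L}$ lives in $\mathcal{R}_{K-1}$, I handle this by invoking sufficiency at boundary $k+1$, which forces every path from $\theta_k$ to $\mathcal{L}$ to pass through $m_{k+1}$. The one degenerate case is $k=K-1$, where the loss lies in region $K-1$ itself. There I rely on the standing assumption that $\mathcal{L}$ depends on $\mathcal{R}_{K-1}$ only via $m_K$, which makes $m_K$ play the role of the terminal interface.
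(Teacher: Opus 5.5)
Your proposal is correct and follows essentially the same route as the paper: Lemma~\ref{lem:adjoint-factorization} at boundary $k+1$ gives the adjoint recursion, the chain rule through $m_{k+1}$ gives $\nabla_{\theta_k}\mathcal{L}$, and unrolling the recursion gives the ordered product. Your additions spell out details the paper's proof leaves implicit: the recursive definition of $\Lambda_k$ (which makes $\bar m_k$ well-defined), the check that $m_k$ does not depend on $\theta_k$, and the treatment of $k=K-1$ via the standing assumption that the loss depends on everything only through $m_K$.
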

\begin{proof}
Fix $k \in \{0, \dotsc, K-1\}$. By Corollary~\ref{cor:interface-chain}, all influence of earlier regions on later computation factors through the interface chain, and in particular through $m_{k+1}$. Applying Lemma~\ref{lem:adjoint-factorization} at boundary $k{+}1$ gives $\bar{m}_k = (\partial m_{k+1}/\partial m_k)^\top \bar{m}_{k+1}$. A second application of the chain rule to the local parameters $\theta_k$, whose influence on $\mathcal{L}$ factors solely through $m_{k+1}$, gives $\nabla_{\theta_k}\mathcal{L} = (\partial m_{k+1}/\partial \theta_k)^\top \bar{m}_{k+1}$. Finally, unrolling the recursion of $\bar{m}_k$ yields $\bar{m}_k = J_k^\top \bar{m}_{k+1} = J_k^\top J_{k+1}^\top \bar{m}_{k+2} = \cdots = \left(\prod_{j=k}^{K-1} J_j^\top \right) \bar{m}_K$,
proving~\eqref{eq:intr-reg-adj-prod}. For $k = K$, the product is empty and therefore equals identity.
\end{proof}

The product form in Eq.~\eqref{eq:intr-reg-adj-prod} shows that inter-region adjoint transport reduces to the composition of linear operators $\{J_k^\top\}$. Since matrix multiplication is associative, this composition admits a parallel scan formulation.

\begin{proposition}[Scan Formulation of Backpropagation]\label{prop:scan-parallel-backprop}
The inter-region adjoints $\{\bar{m}_k\}_{k=0}^K$ admit an exact suffix scan formulation over the sequence $\{J_k^\top\}_{k=0}^{K-1}$. In particular, defining $P_k := \prod_{j=k}^{K-1} J_j^\top$, we have $\bar{m}_k = P_k \bar{m}_K$, where the products $\{P_k\}$ can be computed in parallel using any associative scan algorithm.
\end{proposition}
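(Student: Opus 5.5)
The plan is to reduce the proposition to the product form already established in Theorem~\ref{thm:bi-backprop}, and then observe that computing all suffix products of a sequence under an associative binary operation is exactly a (reversed) scan. First, I would set $P_K := I_r$ and define $P_k := \prod_{j=k}^{K-1} J_j^\top = J_k^\top J_{k+1}^\top \cdots J_{K-1}^\top$ for $k=0,\dotsc,K-1$. Equation~\eqref{eq:intr-reg-adj-prod} then reads $\bar m_k = P_k \bar m_K$ for every $k$, with the empty product handling $k=K$. This gives the identity $\bar m_k = P_k\bar m_K$ with no further work.

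Second, I would exhibit the scan structure explicitly. The suffix products satisfy the recurrence $P_k = J_k^\top P_{k+1}$ with $P_K = I_r$. Reindex by $i = K-1-k$ and set $A_i := J_{K-1-i}^\top$. The suffix product $P_k$ then equals $A_{K-1-k}\cdots A_0$ up to the order of composition. To avoid ordering confusion, define the binary operator $X \otimes Y := Y X$ on $\mathbb{R}^{r\times r}$. With this operator, $P_k = A_0 \otimes A_1 \otimes \cdots \otimes A_{K-1-k}$, which is an inclusive prefix scan of $(A_0,\dotsc,A_{K-1})$ under $\otimes$.

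Third, I would verify the hypotheses that any associative scan algorithm needs. The operator $\otimes$ is associative because $(X\otimes Y)\otimes Z = Z(YX) = (ZY)X = X\otimes(Y\otimes Z)$, which inherits directly from associativity of matrix multiplication. It also has identity $I_r$, which supplies the exclusive/empty case $P_K$. Standard scan algorithms such as Blelloch or Hillis--Steele require only associativity, so they return every $P_k$ exactly, with $O(\log K)$ combine depth and $O(K)$ combines. In exact arithmetic the result is independent of parenthesization. Finally, each $\bar m_k = P_k\bar m_K$ is an independent matrix--vector product, so all of them can be computed in parallel.

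The only real obstacle is bookkeeping. Matrix multiplication is not commutative, so the combine must preserve the order $J_k^\top J_{k+1}^\top\cdots$, that is, left-multiplication by earlier indices. I would guard against this by stating the reversed operator $\otimes$ explicitly and checking associativity for it rather than for plain multiplication. I would also check the base case $P_{K-1} = J_{K-1}^\top$ against the recurrence.
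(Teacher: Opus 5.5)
Your proposal is correct and follows the same route as the paper: use the product form of Theorem~\ref{thm:bi-backprop} to get $\bar m_k = P_k\bar m_K$, then invoke associativity of matrix multiplication so that any parallel scan computes the suffix products. Your reversed operator $X\otimes Y := YX$ and the convention $P_K = I_r$ only make explicit the ordering bookkeeping the paper leaves implicit. One small slip, which does not affect the statement: Hillis--Steele uses $O(K\log K)$ combines rather than $O(K)$, and only Blelloch's work-efficient scan achieves $O(K)$ combines at $O(\log K)$ depth.
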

\begin{proof}
By Theorem~\ref{thm:bi-backprop}, the inter-region adjoints satisfy $\bar{m}_k = \left(\prod_{j=k}^{K-1} J_j^\top \right) \bar{m}_K = P_k \bar{m}_K$. Hence, computing all inter-region adjoints reduces to computing the suffix products $\{P_k\}_{k=0}^K$ of the sequence $\{J_k^\top\}_{k=0}^{K-1}$. Since matrix multiplication is associative, these suffix products can be computed by any standard parallel prefix/suffix scan algorithm~\citep{blelloch1990prefix}.
\end{proof}

This replaces the sequential dependency chain of reverse-mode backpropagation with a logarithmic-depth parallel scan over fixed-size operators.

\begin{corollary}[Region-Local Backward Independence]\label{cor:intra-reg-local-bwd}
Following Theorem~\ref{thm:bi-backprop}, once the interface adjoints $\{\bar{m}_k\}$ are available, all region-local backward computations are independent across regions.
\end{corollary}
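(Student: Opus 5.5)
The plan is to show that, for each region $k$, every quantity computed in region $k$'s local backward is a function only of region-local data: the forward cache $C_k$ (the values of nodes in $\mathcal{R}_k$ together with the incoming interface $m_k$), the local parameters $\theta_k$, and the single incoming adjoint $\bar m_{k+1}$. Once this is shown, no region's backward reads anything produced by another region's backward, so the $K$ computations can be executed in any order or concurrently. I will argue this directly from Theorem~\ref{thm:bi-backprop} and Corollary~\ref{cor:interface-chain}, without re-deriving any adjoint recursion.

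First, I fix $k$ and invoke Corollary~\ref{cor:interface-chain}. It says all dependence of later regions on $\mathcal{R}_k$ factors through $m_{k+1} = R_k(m_k;\theta_k)$. In addition, every node internal to $\mathcal{R}_k$ is a function of $(m_k,\theta_k)$ by sufficiency at boundary $k$. Second, by Lemma~\ref{lem:adjoint-factorization} applied at boundary $k{+}1$, the adjoint of any node $w \in \mathcal{R}_k$ with respect to $\mathcal{L}$ equals $(\partial m_{k+1}/\partial w)^\top \bar m_{k+1}$. The only edge of the proof needing care is that this lemma is stated for variables in $\mathcal{R}_{<k+1}$, which includes $\mathcal{R}_k$, so it applies. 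Third, the Jacobian $\partial m_{k+1}/\partial w$ is a derivative of the restricted map $R_k$. It is therefore evaluated purely from $C_k$ and $\theta_k$, because the subgraph from $w$ to $m_{k+1}$ lies inside $\mathcal{R}_k$; this follows from the topological ordering in Definition~\ref{def:region-partition} together with the separator property. By the same reasoning, the parameter gradient $\nabla_{\theta_k}\mathcal{L} = (\partial m_{k+1}/\partial\theta_k)^\top \bar m_{k+1}$ from Theorem~\ref{thm:bi-backprop} depends only on $(C_k,\theta_k,\bar m_{k+1})$.

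Finally, I conclude independence. Given the family $\{\bar m_k\}$, which Proposition~\ref{prop:scan-parallel-backprop} supplies, the region-$k$ backward is a map $\Phi_k(C_k,\theta_k,\bar m_{k+1})$. Its inputs intersect the outputs of no $\Phi_j$ with $j\neq k$, so there is no data dependency between regions. The main obstacle is the third step: justifying that no path from an internal node of $\mathcal{R}_k$ to $m_{k+1}$ leaves $\mathcal{R}_k$. A path could otherwise exit into $\mathcal{R}_{>k}$ and return, but I will rule this out by topological ordering, since edges never go from $\mathcal{R}_{j}$ to $\mathcal{R}_{k}$ with $j>k$, and $m_{k+1}$ is produced at the boundary of $\mathcal{R}_k$. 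Any contribution of internal nodes to $\mathcal{L}$ through paths not passing $m_{k+1}$ is likewise excluded by the sufficiency of $m_{k+1}$.
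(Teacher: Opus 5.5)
Your proposal is correct and follows essentially the same route as the paper. Both use the factorization $\nabla_{\theta_k}\mathcal{L} = (\partial m_{k+1}/\partial\theta_k)^\top \bar m_{k+1}$ from Theorem~\ref{thm:bi-backprop} and observe that this Jacobian depends only on region-$k$ forward data, so region $k$'s backward is a function of $(C_k,\theta_k,\bar m_{k+1})$ alone. Your version is more explicit than the paper's in two ways: it extends the argument to internal-node adjoints via Lemma~\ref{lem:adjoint-factorization} at boundary $k{+}1$, and it justifies locality by the topological ordering. That step relies on $m_{k+1}$ being computed inside $\mathcal{R}_k$, an assumption the paper's proof of Corollary~\ref{cor:interface-chain} also makes implicitly.
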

\begin{proof}
By Theorem~\ref{thm:bi-backprop}, the contribution of region $k$ to the backward pass factors as $\nabla_{\theta_k}\mathcal{L}=\left(\partial m_{k+1}/\partial\theta_k\right)^\top\bar{m}_{k+1}$. The Jacobian $\partial m_{k+1}/\partial\theta_k$ depends only on the forward computation internal to region $k$, and therefore only on region-$k$ forward activations and local parameters $\theta_k$. It does not depend on activations, parameters, or adjoints from any other region, except through the incoming interface adjoint $\bar m_{k+1}$. Consequently, once the interface adjoints have been computed, each region-local backward computation depends only on local data and may be executed independently across regions.
\end{proof}

Proposition~\ref{prop:scan-parallel-backprop} and Corollary~\ref{cor:intra-reg-local-bwd} establish the structural decomposition. Crucially, this structure holds for any interface dimension, including $r_k = d_k$ (recovering BPPSA's full-rank scan~\citep{exact-para-scan-bp}). What separates the bounded-interface regime from the full-rank regime is not the existence of an associative decomposition, but the cost of each scan operation. We address this in Section~\ref{sec:impl}.

%% file: sections/04-implementation.tex
\section{Model Realization}\label{sec:impl}

\begin{table}[h!]
\centering
\begin{tabularx}{\linewidth}{c|c|c|c|c|c}
Method & FLOPs & Span & Operator & $I$ & $J_k$? \\
\hline
Sequential BP & $\Theta((BLD)^2)$ & $\Theta(K(BLD)^2)$ & $BLD \times BLD$ matvec & $\Theta(1)$ & No \\
Full-rank scan & $\Theta((BLD)^3)$ & $\Theta((BLD)^3 \log K)$ & $BLD \times BLD$ matmul & $\Theta(BLD)$ & Yes \\
LBI scan & $\Theta(r^3)$ & $\Theta(r^3 \log K)$ & $r \times r$ matmul & $\Theta(r)$ & Yes
\end{tabularx}
\caption{Comparison of inter-region gradient transport primitives. $I$ denotes arithmetic intensity (FLOPs/byte)~\citep{williams2009roofline}. Standard sequential BP applies Jacobian-vector products, while scan-based methods compose Jacobians via matrix-matrix products. The table compares per-step (sequential) or per-combine (scan) costs. The cost of materializing the Jacobian $J_k$ for scan-based methods is excluded and analyzed separately. Entries are derived in Appendix~\ref{app:representative-instances} (Remark~\ref{rem:table-transport-size-derivation}).}
\label{tab:transport-size-comparison}
\end{table}

Table~\ref{tab:transport-size-comparison} summarizes the cost comparison between the three transport regimes. We now address the concrete architecture that enforces bounded interfaces (Section~\ref{subsec:bi-architecture}), the computational cost of the resulting scan and Jacobian construction (Section~\ref{subsec:compute-cost}), and the backward algorithm (Section~\ref{subsec:backward-algorithm}).

\subsection{Bounded-Interface Architecture}\label{subsec:bi-architecture}

\paragraph{Instantiation Template.} We instantiate a bounded-interface model by associating each region $k$ with a local computation over activations $x_k$ and a low-dimensional interface state $m_k \in \mathbb{R}^r$. At each region $k$, the forward computation is defined as
\begin{equation}\label{eq:concrete-region-injection}
    x_k^\text{in} = x_\text{embed} + \operatorname{Dec}_k(m_k), \quad x_k^\text{out} = \Phi_k(x_k^\text{in}),
\end{equation}
where $\Phi_k$ denotes the region transformation map (e.g. attention, SSM, MLP), and $\operatorname{Dec}_k$ injects the interface state into the region. The interface state is then updated by
\begin{equation}\label{eq:concrete-region-arch}
    m_{k+1} = R_k(m_k) = \operatorname{LN}(m_k + \alpha_k \operatorname{Enc}_k(\operatorname{pool}(\Phi_k(x_\mathrm{embed} + \operatorname{Dec}_k(m_k))))),
\end{equation}
where $\operatorname{Enc}_k$ projects region outputs into the interface space, $\operatorname{pool}(\cdot)$ aggregates region outputs, $\alpha_k$ is a learned per-region scaling parameter, and $\operatorname{LN}$ denotes layer normalization~\citep{ba2016layer}. The initial interface state $m_0 \in \mathbb{R}^{r}$ is derived from the input tokens, and the terminal state $m_K$ serves as input to the terminal block / loss head. Thus, the construction in Eq.~\eqref{eq:concrete-region-arch} induces a region-partitioned computational graph in which all inter-region dependencies are mediated by the interface states $\{m_k\}_{k=0}^K$, thereby realizing the abstract region transition map $R_k(m_k; \theta_k)$ of Corollary~\ref{cor:interface-chain} in a concrete architectural form. Notice that when conditioned on $m_{k+1}$, no variable in region $\mathcal{R}_{\geq k+1}$ depends on variables in $\mathcal{R}_{\leq k}$, so the bounded-interface property holds.

\paragraph{Interface Design and Coupling.} A key design constraint in bounded-interface models is that all dynamic inter-region dependencies must be mediated through $m_k$, with no activation bypass paths from $x^\mathrm{out}_k$ to subsequent regions. Instead, we inject a shared static signal $x_\mathrm{embed}$ at each region (Eq.~\eqref{eq:concrete-region-injection}), providing every region with a common representational canvas. The interface state $m_k$ then needs to encode only the refinement from preceding regions rather than the full hidden representation, which explains why the interface rank can be small relative to the model dimension. The canvas does not violate the bounded-interface property (Definition~\ref{def:bounded-interface-model}): $x_\mathrm{embed}$ depends only on the input tokens, not on any region's computation, and therefore does not constitute an inter-region communication channel. Our implementation uses 2-layer MLPs for the encoder and decoder, which introduces $\Theta(D^2_k)$ parameters per encoder/decoder pair, and is comparable in order to the region transformation $\Phi_k$ itself (Table~\ref{tab:val-loss-final}). Alternative encoder/decoder realizations are discussed in Section~\ref{sec:disc}.

\subsection{Computational Cost}\label{subsec:compute-cost}

The cost of bounded-interface backpropagation decomposes into three components: Jacobian construction, scan composition, and region-local backward. We state the structural work--span~\citep{blelloch1996programming} result first, then derive each term.

\begin{proposition}[Work--Span Decomposition]\label{prop:work-span-decomposition}
The total work of bounded-interface backpropagation decomposes as,
\begin{equation}\label{eq:bi-total-work}
    W_\mathrm{bi} = \sum_{k=0}^{K-1} W_k^{J} + \Theta\left(Kr^3\right) + \sum_{k=0}^{K-1} W_k^\mathrm{local},
\end{equation}
and the total span satisfies
\begin{equation}\label{eq:bi-total-span}
    T_\mathrm{bi} = \max_k W_k^J + \Theta(r^3 \log K) + \max_k W_k^\mathrm{local}
\end{equation}
where the three terms correspond to (i) Jacobian construction across all regions (Eqs.~\eqref{eq:jacobian-work}--\eqref{eq:jacobian-arithmetic-intensity}), (ii) scan composition over interface Jacobians (Proposition~\ref{prop:scan-parallel-backprop}), and (iii) region-local backward computation (Corollary~\ref{cor:intra-reg-local-bwd}).
\end{proposition}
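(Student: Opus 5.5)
The plan is to read Algorithm~\ref{alg:bi-backprop} as a task DAG with three phases separated by two barriers, and apply the standard work--span rules: work adds across every task, span adds along a chain of barrier-separated phases, and inside a phase of independent tasks the span is the maximum task span. Under the convention used here, where each region-local task runs as one sequential unit, a task's span equals its work. I will check the dependency structure first, then cost each phase, and finally combine the three.

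\textbf{Phase 1 (Jacobian construction).} Building $J_k=\partial m_{k+1}/\partial m_k$ needs only region $k$'s forward cache $C_k$, the state $m_k$, and $\theta_k$. This is true because, by Corollary~\ref{cor:interface-chain}, $m_{k+1}=R_k(m_k;\theta_k)$ is a function of local quantities alone. The $K$ construction tasks therefore have no edges between them. Their work is $\sum_k W_k^J$ and their span is $\max_k W_k^J$. The value of $W_k^J$, namely $r$ tangent passes of cost $\Theta(F_k)$ each, is deferred to Eqs.~\eqref{eq:jacobian-work}--\eqref{eq:jacobian-arithmetic-intensity}. The proposition only needs it as a named quantity.

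\textbf{Phase 2 (scan).} By Proposition~\ref{prop:scan-parallel-backprop}, all the $\bar m_k$ come from the suffix products $P_k$ of the $J_j^\top$. A work-efficient associative scan, such as Blelloch's up-sweep/down-sweep, uses $\Theta(K)$ combines arranged in $\Theta(\log K)$ levels. Each combine is an $r\times r$ matrix product costing $\Theta(r^3)$. That gives work $\Theta(Kr^3)$ and span $\Theta(r^3\log K)$. The final matvecs $P_k\bar m_K$ cost $\Theta(Kr^2)$ work and $\Theta(r^2)$ span, so they are absorbed into these bounds.

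\textbf{Phase 3 (local backward).} By Corollary~\ref{cor:intra-reg-local-bwd}, once the $\bar m_{k+1}$ are known, the $K$ region-local VJPs $(\partial m_{k+1}/\partial\theta_k)^\top\bar m_{k+1}$ are independent. Their work is $\sum_k W_k^{\mathrm{local}}$ and their span is $\max_k W_k^{\mathrm{local}}$.

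Adding the three work terms gives Eq.~\eqref{eq:bi-total-work}. Adding the three phase spans across the two barriers gives Eq.~\eqref{eq:bi-total-span}.

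The main obstacle is making the span claim an equality rather than only an upper bound. Summing the phase spans gives an upper bound immediately. For the matching lower bound I would exhibit a critical path. Take the region $k^\ast$ that maximizes $W_k^J$. The scan output depends on every $J_j$, including $J_{k^\ast}$. After that comes the scan's $\Theta(\log K)$-level reduction, which contains a chain of $\Theta(\log K)$ dependent combines. Finally, the region that maximizes $W_k^{\mathrm{local}}$ needs its $\bar m$, which comes from the scan output. This path realizes all three terms up to constants, which is enough because the statement holds up to the $\Theta$ on the scan term. The subtle point is that the equality holds only under the three-phase barrier schedule. If phases are overlapped, as in Algorithm~\ref{alg:bi-backprop-streaming}, the span can be smaller, so I will state explicitly that $T_{\mathrm{bi}}$ refers to the schedule of Algorithm~\ref{alg:bi-backprop}.
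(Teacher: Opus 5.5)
Your proposal is correct and follows the same route as the paper's proof. Both arguments run the three phases of Algorithm~\ref{alg:bi-backprop} in sequence, use the locality of each $J_k$ and Corollary~\ref{cor:intra-reg-local-bwd} to get summed work and max span for Phases 1 and 3, and take $\Theta(Kr^3)$ work and $\Theta(r^3\log K)$ span for the scan from Proposition~\ref{prop:scan-parallel-backprop}. Your critical-path lower bound and the explicit restriction to the barrier schedule go slightly beyond the paper, which simply takes span additivity for sequentially composed phases as given; once barriers are assumed, the span of that composition is exactly the sum of the phase spans, so the separate lower-bound step is not needed.
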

\begin{proof}
By Proposition~\ref{prop:scan-parallel-backprop}, inter-region adjoint transport reduces to a suffix scan over interface Jacobians $J_k$, contributing total work $\Theta(K r^3)$ and span $\Theta(r^3 \log K)$ for uniform $r$. Constructing each $J_k$ requires propagating $r$ perturbation directions through the linearized region $R_k$, at a cost we denote $W_k^J$ (quantified below). Since these constructions depend only on region-local forward caches and the incoming $m_k$, all $K$ constructions are independent, contributing total work $\sum_k W_k^J$ and span $\max_k W_k^J$. By Corollary~\ref{cor:intra-reg-local-bwd}, once the interface adjoints $\{\bar m_k\}$ are available, all region-local backward computations are independent, contributing total work $\sum_k W_k^\mathrm{local}$ and span $\max_k W_k^\mathrm{local}$. The three components execute sequentially: Jacobian construction requires the forward caches, the scan requires the constructed Jacobians, and region-local backward requires the scan output.
\end{proof}

\paragraph{Jacobian Construction.} Each $J_k := \partial m_{k+1}/\partial m_k \in \mathbb{R}^{r \times r}$ is constructed by propagating $r_k$ perturbation directions through the linearized region $R_k$ via automatic differentiation (AD)~\citep{baydin_AD}. Appendix~\ref{app:implement-details} details the construction procedure and its implementation. Let $F_k$ and $Q_k$ denote the total forward FLOPs and memory traffic for region $k$. Jacobian construction augments the forward computation with a basis dimension of size $r$, yielding total work
\begin{equation}\label{eq:jacobian-work}
    W_k^J = \mathrm{FLOPs}(J_k) = \Theta(r F_k).
\end{equation}
Under activation reuse across perturbation directions, memory traffic is nearly unchanged, giving arithmetic intensity~\citep{williams2009roofline},
\begin{equation}\label{eq:jacobian-arithmetic-intensity}
    I(J_k) = \Theta\left(r F_k/Q_k\right),
\end{equation}
an $r$-fold increase over the forward pass. The materialized Jacobian itself requires only $r^2$ scalars per region which is negligible relative to activation storage. Whether this places Jacobian construction in the compute-bound regime depends on the degree of activation reuse in practice. For $\sim$1B scale configurations ($r=64, d=BLD \approx 1.26\times 10^7$), the per-combine cost drops from $\Theta(d^3) \approx 2.0\times 10^{21}$ for the full-rank scan to $\Theta(r^3) \approx 2.6 \times 10^5$, a reduction of approximately $10^{16}\times$. Appendix~\ref{app:representative-instances} provides the full numerical analysis and shows compute-bound operation is achievable under sufficient reuse.

\paragraph{Scan and Region-Local Backward.} For the configurations in our experiments, the scan stage is negligible relative to Jacobian construction (Appendix~\ref{app:representative-instances}). Once interface adjoints are available (Corollary~\ref{cor:intra-reg-local-bwd}), each region's local backward costs $W_k^\mathrm{local} = \Theta(F_k)$, identical to standard reverse-mode seeded with $\bar m_{k+1}$. The dominant cost is Jacobian construction, with a work--depth tradeoff of an $(r{+}1)\times$ increase in total arithmetic in exchange for $O(\log K)$ scan depth and full region-level parallelism in both Jacobian construction and local backward computation.

\begin{algorithm}[h!]
\caption{Bounded-Interface Backpropagation}
\label{alg:bi-backprop}
\begin{algorithmic}[1]

\Require Region transition maps $\{R_k\}_{k=0}^{K-1}$, forward caches $\{C_k\}_{k=0}^{K-1}$, local parameters $\{\theta_k\}_{k=0}^{K-1}$, terminal interface adjoint $\bar m_K$
\Ensure Interface adjoints $\{\bar m_k\}_{k=0}^K$ and parameter gradients $\{\nabla_{\theta_k}\mathcal{L}\}_{k=0}^{K-1}$

\Statex $\rhd$ Phase 1: Jacobian construction

\ForAll{regions $k = 0, \dotsc, K-1$ \textbf{in parallel}}
    \State Construct interface Jacobians $J_k = \frac{\partial m_{k+1}}{\partial m_k} \in \mathbb{R}^{r \times r}$ from $C_k$ via AD
\EndFor
\Statex $\rhd$ Phase 2: Interface adjoint scan
\State Compute suffix products $\{P_k\}_{k=0}^K$ via parallel scan over $\{J_k^\top\}_{k=0}^{K-1}$
\ForAll{$k = 0, \dotsc, K$ \textbf{in parallel}}
    \State $\bar m_k \leftarrow P_k \bar m_K$
\EndFor

\Statex $\rhd$ Phase 3: Region-local backward
\ForAll{regions $k=0, \dotsc, K-1$ \textbf{in parallel}}
    \State Compute $\nabla_{\theta_k} \mathcal{L} = \left(\frac{\partial m_{k+1}}{\partial \theta_k} \right)^\top \bar m_{k+1}$ using $C_k$
    \State Compute any additional local adjoints for region $k$
\EndFor

\Return $\{\bar m_k\}_{k=0}^K$ and $\{\nabla_{\theta_k}\mathcal{L}\}_{k=0}^{K-1}$
\end{algorithmic}
\end{algorithm}

\subsection{Backward Algorithm}\label{subsec:backward-algorithm}

Algorithm~\ref{alg:bi-backprop} gives the backward procedure for a bounded-interface model, separating computation into three phases corresponding to the terms in Proposition~\ref{prop:work-span-decomposition}. Phase 1 constructs all interface Jacobians independently across regions. Phase 2 composes them via a parallel suffix scan (Proposition~\ref{prop:scan-parallel-backprop}) and is the sole sequential bottleneck with span $\Theta(r^3\log K)$. Phase 3 computes all region-local parameter gradients independently (Corollary~\ref{cor:intra-reg-local-bwd}). The three-phase structure exposes opportunities for overlapped execution, with a candidate streaming schedule given in Appendix~\ref{app:exec-and-schedule} (Algorithm~\ref{alg:bi-backprop-streaming}).

%% file: sections/05-experiments.tex
\section{Experiments}\label{sec:experiments}

\paragraph{Experimental Setup.} We evaluate bounded-interface backpropagation across four model backends: Mamba-2~\citep{mamba-2}, Mamba-3 SISO~\citep{mamba-3} (MIMO is excluded, see Appendix~\ref{app:experiment-details}), Transformer~\citep{vaswani2017attention}, and Hybrid (3$\times$ Mamba-3 SISO + 1$\times$ Transformer). All models are trained on 20.48M tokens from the FineWeb-Edu dataset~\citep{fineweb-edu} using the 32k LLaMA tokenizer~\citep{touvron2023llama} with standard next-token prediction at a 1024 context length. Our experiments evaluate bounded-interface backpropagation independently within each architecture, comparing dense and LBI variants at a fixed layer count, model dimension, and training budget, varying only the interface rank $r$. LBI models carry additional encoder/decoder parameters (10--18\%, Table~\ref{tab:val-loss-final}), but we hold intra-region parameters fixed rather than match total parameters. Reducing backend capacity would conflate weaker per-region computation with the interface bottleneck. Layer counts differ across backends (14 for Mamba-2/Mamba-3, 12 for Transformer/Hybrid) as they were chosen to yield comparable backend parameter counts (47-61M) rather than to match $K$. Each region comprises 2 consecutive layers, yielding $K{=}7$ regions for Mamba-2/Mamba-3 and $K{=}6$ for Transformer/Hybrid. Cross-architecture comparisons of the absolute LBI gap should therefore be interpreted as observational rather than controlled. Hyperparameters were tuned over dense baselines and transferred to LBI variants.

\begin{figure}[h!]
\centering
\resizebox{0.9\linewidth}{!}{%
\begin{tabular}{cc}
\begin{subfigure}{0.5\linewidth}
    \includegraphics[width=\linewidth]{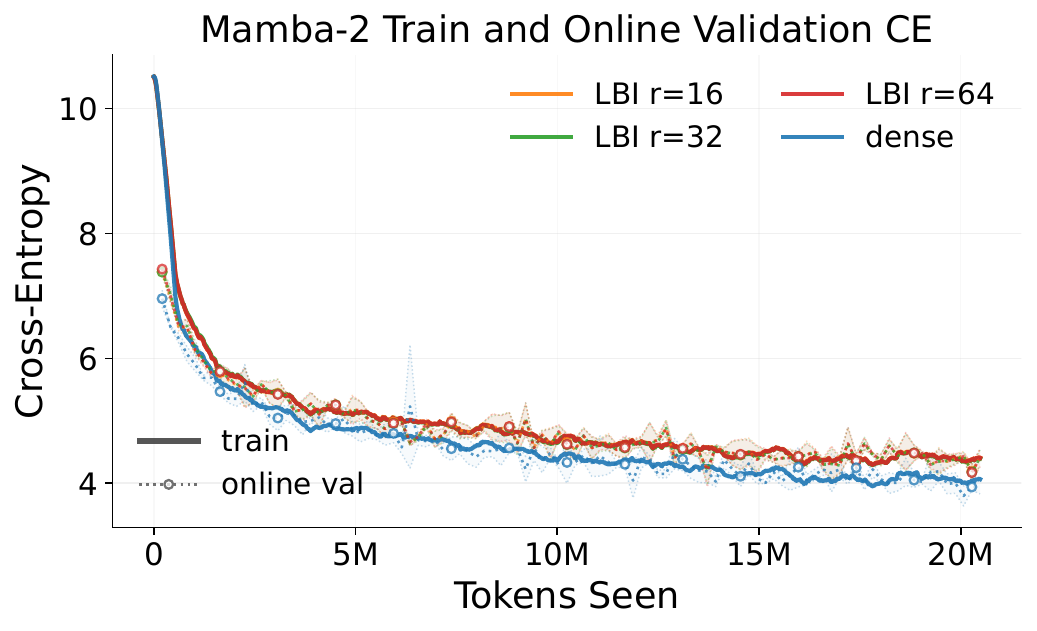}
    \caption{Mamba-2}
\end{subfigure} &
\begin{subfigure}{0.5\linewidth}
    \includegraphics[width=\linewidth]{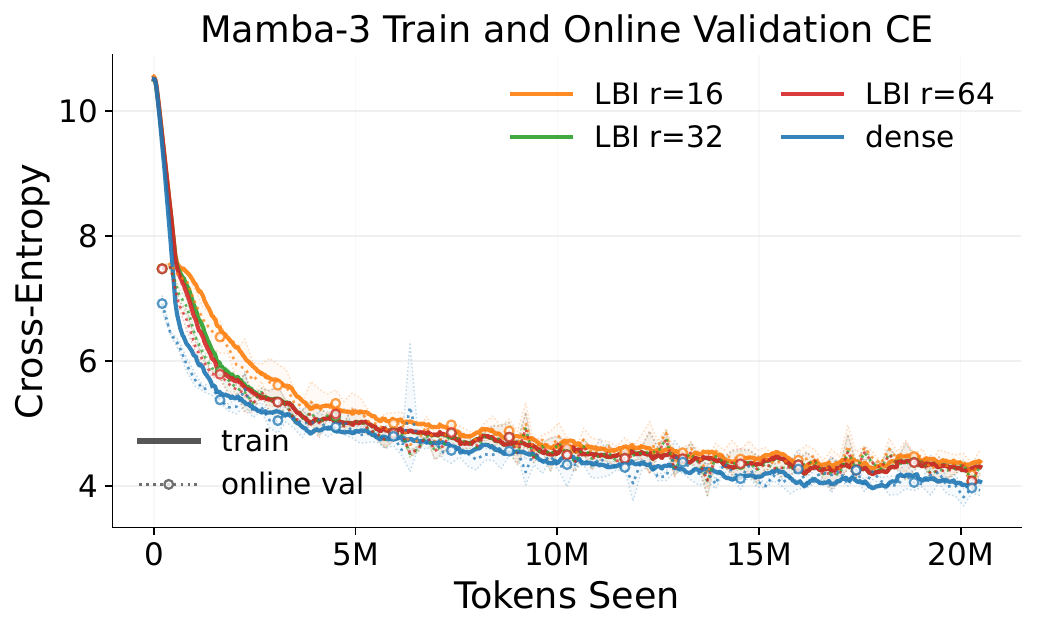}
    \caption{Mamba-3 SISO}
\end{subfigure} \\

\begin{subfigure}{0.5\linewidth}
    \includegraphics[width=\linewidth]{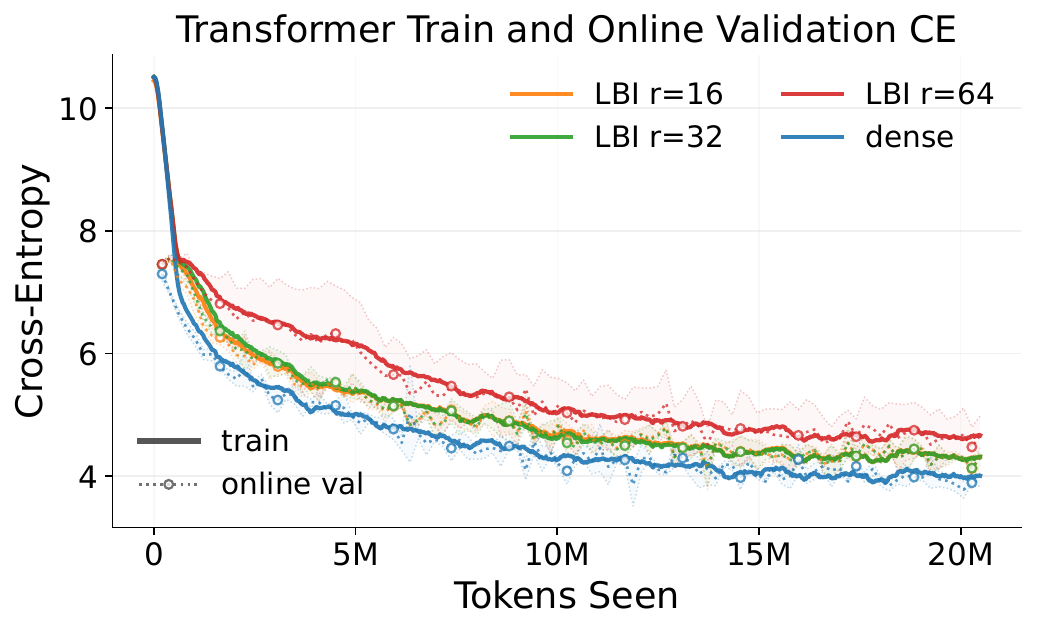}
    \caption{Transformer}
\end{subfigure} &
\begin{subfigure}{0.5\linewidth}
    \includegraphics[width=\linewidth]{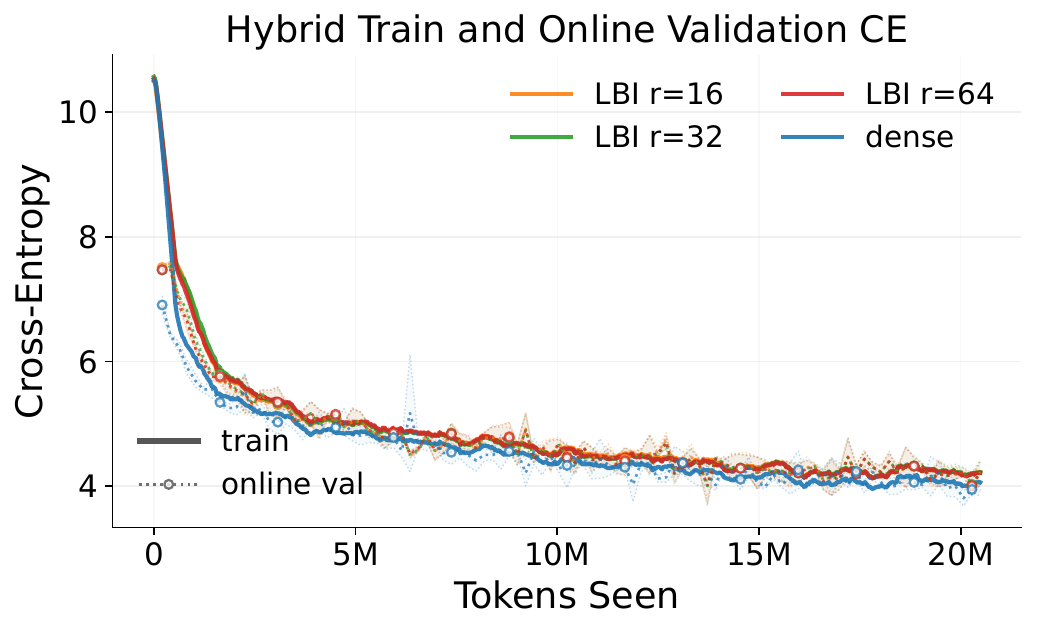}
    \caption{Hybrid}
\end{subfigure}
\end{tabular}
}
\caption{Smoothed training CE (solid lines) and periodic online validation CE (markers) on FineWeb-Edu across four model backends, averaged over three seeds. Training curves are smoothed with an exponential moving average for readability. Shaded regions indicate $\pm1$ standard deviation across seeds for validation loss. Online validation uses a small held-out budget per evaluation, final quantitative comparisons are reported using post-hoc evaluation on the full held-out sample (Table~\ref{tab:val-loss-final}).}
\label{fig:train-val-curves}
\end{figure}

\begin{table}[h!]
\centering
\small
\setlength{\tabcolsep}{4pt}
\resizebox{0.8\linewidth}{!}{%
\begin{tabular}{l|c|c|c|c|c}
Architecture & Rank $r$ & Backend $N$ & Interface $N$ & Total $N$ & Post-Hoc Val CE $\downarrow$ \\
\hline
\multirow{4}{*}{Mamba-2}
& -- & \multirow{4}{*}{51.33M} & -- & 75.91M & $4.061 \pm 0.010$ \\
& 16 &  & 9.05M & 84.95M & $4.413 \pm 0.016$ \\
& 32 &  & 9.23M & 85.14M & $4.404 \pm 0.009$ \\
& 64 &  & 9.60M & 85.51M & $4.408 \pm 0.019$ \\
\hline
\multirow{4}{*}{Mamba-3 SISO}
& -- & \multirow{4}{*}{53.52M} & -- & 78.09M & $4.070 \pm 0.009$ \\
& 16 &  & 9.05M & 87.14M & $4.393 \pm 0.108$ \\
& 32 &  & 9.23M & 87.33M & $4.310 \pm 0.004$ \\
& 64 &  & 9.60M & 87.70M & $4.312 \pm 0.003$ \\
\hline
\multirow{4}{*}{Transformer}
& -- & \multirow{4}{*}{47.25M} & -- & 63.63M & $3.992 \pm 0.005$ \\
& 16 &  & 3.52M & 67.16M & $4.318 \pm 0.031$ \\
& 32 &  & 3.63M & 67.26M & $4.314 \pm 0.051$ \\
& 64 &  & 3.84M & 67.48M & $4.646 \pm 0.504$ \\
\hline
\multirow{4}{*}{Hybrid}
& -- & \multirow{4}{*}{60.97M} & -- & 85.55M & $4.058 \pm 0.005$ \\
& 16 &  & 7.84M & 93.39M & $4.223 \pm 0.053$ \\
& 32 &  & 8.00M & 93.55M & $4.237 \pm 0.067$ \\
& 64 &  & 8.32M & 93.87M
 & $4.221 \pm 0.045$ \\
\end{tabular}
}
\caption{
    Final validation CE across architectures and interface ranks. Rank $r = \text{--}$ denotes the dense baseline (no interface), while finite $r$ corresponds to bounded-interface (LBI) models. Backend $N$ counts block parameters only and excludes token embeddings and output parameters; Interface $N$ counts additional bounded-interface parameters; Total $N$ counts all trainable parameters. All models reported are trained up to a fixed 20.48M tokens (Figure~\ref{fig:train-val-curves}) to enable controlled comparisons of relative behavior. Reported CE are post-hoc evaluations of final checkpoints, using the same fixed held-out FineWeb-Edu sample of 524,288 tokens, with mean $\pm$ standard deviation across three seeds.
}
\label{tab:val-loss-final}
\end{table}

\paragraph{Training Quality and Rank Sensitivity.} Bounded-interface backpropagation recovers gradients identical to standard reverse-mode AD applied to the same LBI model across all four architectures (Appendix~\ref{app:experiment-details}, Table~\ref{tab:gradient-parity}). Models train stably and remain competitive despite severe communication compression, converging within 0.16--0.35 CE of their dense counterparts (Table~\ref{tab:val-loss-final}). The modest gap is approximately constant across ranks $r \in \{16, 32, 64\}$ for most backends, suggesting that even the smallest rank tested ($r=16$) captures the dominant inter-region communication at this scale. The Transformer backend at $r{=}64$ exhibits high variance $(\pm0.504)$ from one unstable seed, which we attribute to initialization sensitivity. At $r\leq32$, training is stable (variance $\leq 0.051$). Since the CE gap persists despite the parameter advantage, the observed gap is a conservative upper bound on the bottleneck's cost. Appendix~\ref{app:region-size} provides a sensitivity analysis over region sizes. These results establish that bounded interfaces preserve model quality at the scales tested, validating the practical viability of the scan-based backward pass developed in Sections~\ref{sec:math}--\ref{sec:impl}.

%% file: sections/06-related-work.tex
\section{Related Work}\label{sec:related-work}
The formulation of backpropagation as a composition of Jacobian operators to enable parallel scan was introduced by BPPSA~\citep{exact-para-scan-bp}. However, BPPSA and related advances~\citep{deepPCR,waveScattering} rely on full-rank $d\times d$ Jacobians, making each scan combine intractable at modern hidden-state dimensions, and are further restricted to operations with closed-form Jacobians (e.g. convolutions), precluding application to Transformers and SSMs. Our work bridges this gap by observing that inter-layer communication can be restricted to compact latent interfaces, reducing operator Jacobian size to $r\times r$. Other efforts to break the sequential bottleneck include iterative methods~\citep{iterative-residual,lim2024parallelizingnonlinearsequentialmodels,scale-parallel-nRNN,layer-para-rnn}, decoupled approaches~\citep{de-info-reg,decoupled-neural-interfaces}, and Jacobian-Free Backpropagation~\citep{jfb}, which trade gradient exactness for throughput via approximations or auxiliary objectives. Pipeline parallel techniques~\citep{interlocking-backprop,chimera,GPipe} improve hardware utilization through micro-batch scheduling while leaving the $O(K)$ algorithmic depth intact. Our approach is complementary to these scheduling gains as it targets a reduction in the theoretical dependency depth from $O(K)$ to $O(\log K)$ without introducing gradient approximation.

%% file: sections/07-discussion.tex
\section{Discussion \& Limitations}\label{sec:disc}

\paragraph{From Algorithmic to Practical Parallelism.} Bounded-interface backpropagation decomposes into three phases: Jacobian construction, scan composition, and region-local backward. Our implementation provides a dedicated CUDA kernel for the suffix scan, but realizing the true wall-clock advantages (fused Jacobian materialization, overlapped scheduling, and fused region-local backward passes) requires further kernel engineering (Appendix~\ref{app:implement-details}). This decomposition has a direct implication for distributed training. Standard backpropagation creates an $O(K)$-deep chain of $BLD$-dimensional adjoint transmissions between devices. Although pipeline parallelism~\citep{interlocking-backprop,chimera,GPipe} improves utilization, it preserves this dependency. Bounded interfaces replace this chain with a single $O(\log K)$-depth scan over $K$ matrices of size $r \times r$. Because Jacobian construction and region-local backward are truly independent across regions with zero inter-region communication, this suffix scan becomes the only synchronous phase. In our experiments ($r = 64, K = 7$), the largest scan payload is approximately 56 KB in bf16, which is negligible relative to standard parameter gradient synchronization.

\paragraph{Interface Structure and Analysis.}
The bounded-interface formulation exposes inter-region Jacobians as low-dimensional objects that can be tracked throughout training, enabling direct study of gradient transport properties such as conditioning, spectral decay, and norm evolution, which are difficult to observe in standard backpropagation. Appendix~\ref{app:jacobian-structure} confirms that Mamba-3 SISO LBI chains remain contractive (suffix $\|P_k\|_2 < 1$) even when individual region transitions briefly exceed unit norms. This framework offers significant flexibility: it only mandates $r$-dimensional communication without prescribing the encoder/decoder architecture, thus preserving the parallelism guarantees of Section~\ref{sec:math} regardless of the encoder/decoder realization. While our implementation uses 2-layer MLPs (Section~\ref{subsec:bi-architecture}), this choice is not intrinsic to the framework. A linear projection would cost $\Theta(rD)$ with smaller constants, while structured alternatives (e.g., gated linear projections, cross-attention interfaces~\citep{jaegle2021perceiver}, or Monarch/butterfly matrices~\citep{dao2022monarch}) could reduce the overhead further or improve compression quality via task-specific inductive biases. Optimizing encoder/decoder designs to balance overhead and Jacobian stability remains an open design problem, similar to projection choices in MLA~\citep{deepseek-v2}.

\paragraph{Expressivity and Scale.} Any feedforward model imposes a Markov structure at every layer boundary, as layer $k$ is sufficient for computing all subsequent layers. The bounded-interface property makes this structure explicit and compresses it: the region $\mathcal{R}_{k}$ depends on all preceding regions only through the interface state $m_k \in \mathbb{R}^{r}$. The shared canvas $x_\mathrm{embed}$ (Section~\ref{subsec:bi-architecture}) ensures the interface encodes only inter-region refinements, explaining why small $r$ suffices. Appendix~\ref{app:jacobian-structure} provides spectral evidence that the dominant modes of inter-region communication concentrate in a low-dimensional subspace. Rank and region size sweeps (Table~\ref{tab:val-loss-final}, Appendix~\ref{app:region-size}) support this, as $r=16$ preserves stable training and maintains a near-constant CE gap up to $r=64$. While the current canvas uses cheap raw token embeddings, $r=16$ still achieves near-dense quality. Optimizing the information partitioning between the canvas and the interface remains a natural extension of the encoder/decoder design. At our scale (47-61M parameters, 20.48M tokens), we report CE loss on held-out samples as downstream benchmarks are uninformative at this budget. Finally, while relaxing the bounded-interface constraint to recover dense communication is possible, it falls outside our exact scan-parallel regime and is a natural extension for future work.

%% file: sections/08-conclusion.tex
\section{Conclusion}
\label{sec:conclusion}

We have introduced Latent Bounded Interfaces (LBI), an algorithmic framework that reduces the sequential depth of backpropagation from $O(K)$ to $O(\log K)$ by restricting inter-region communication to a low-dimensional latent space. Our core contribution reduces parallel-scan adjoint recursion from an intractable $O(d^3)$ operation to a tractable $O(r^3)$ cost, where $r \ll d$. Empirically, interfaces as small as $r=16$ maintain training quality, while the contractive nature of the resulting Jacobians ensures stable convergence across four architectures. Realizing the full wall-clock speedup of this approach requires further systems-level engineering to fuse Jacobian materialization and overlap the three phases of our algorithm. LBI reduces cross-device backward communication to an $O(\log K)$-depth scan over fixed-size $r \times r$ matrices, providing a mathematically rigorous foundation for region-parallel training and opening a new axis of parallelism for scaling deep learning beyond the sequential depth bottleneck.